\def\eqref#1{equation~\ref{#1}}
\def\1{\bm{1}}
\DeclareMathAlphabet{\mathsfit}{\encodingdefault}{\sfdefault}{m}{sl}
\SetMathAlphabet{\mathsfit}{bold}{\encodingdefault}{\sfdefault}{bx}{n}
\newcommand{\E}{\mathbb{E}}
\definecolor{coloraa}{rgb}{0.7, 0.36, 0.14}
\definecolor{colorbb}{rgb}{0.42, 0.28, 0.7}
\newcommand{\colorA}[1]{\textcolor{coloraa}{#1}}
\newcommand{\colorB}[1]{\textcolor{colorbb}{#1}}
\newtheorem{lem}{Lemma}
\newtheorem{pro}{Proposition}
\theoremstyle{remark}
\newcommand{\bd}{\boldsymbol}
\newcommand{\dd}{\mathrm{d}}
\newcommand{\Normal}{\mathcal{N}(\bd{0},\bd{I})}
\newcommand{\NormalT}{\mathcal{N}(\bd{0},\sigma_T^2\bd{I})}
\newcommand{\Lcfd}{\mathcal{L}_{\text{CFD}}(\theta)}
\newcommand{\gc}{\bd{g}_\theta(\bd{c})}
\newcommand{\epsf}{\tilde{\bd{\epsilon}}}
\newcommand{\epspm}{\tilde{\bd{\epsilon}}_{\pm}}
\newcommand{\epsp}{\tilde{\bd{\epsilon}}_{+}}
\newcommand{\epsm}{\tilde{\bd{\epsilon}}_{-}}
\newcommand{\ecc}{\epsf(\theta, \bd{c})}
\newcommand{\xt}[1][t]{\bd{x}_{#1}}
\newcommand{\xo}[1][t]{\hat{\bd{x}}^{\text{c}}_{#1}}
\newcommand{\xc}[1][t]{\hat{\bd{x}}^{\text{gt}}_{#1}}
\newcommand{\xopm}{\hat{\bd{x}}^{\text{c}}_{\pm}}
\newcommand{\xtpm}{\bd{x}_{\pm}}
\newcommand{\Jgc}{\frac{\partial \gc}{\partial \theta}}
\newcommand{\mapfn}{\mathcal{T}^{-1}}
\newcommand{\ctw}[1][]{\bd{ctw}_{\bd{c}_{#1}}}
\newcommand{\refspace}{E_{ref}}
\newcommand{\worldspace}{E_{world}}
\newcommand{\dddcn}{multi-view consistent Gaussian noise\xspace}
\newcommand{\ddflow}{clean flow\xspace}
\title{Consistent Flow Distillation for Text-to-3D Generation}
\author{%
  Runjie Yan$^*$\\
  UC San Diego\\
  \And
  Yinbo Chen$^*$\\
  UC San Diego\\
  \And
  Xiaolong Wang\\
  UC San Diego\\
}
\begin{document}

\maketitle

\renewcommand{\thefootnote}{\fnsymbol{footnote}}
\footnotetext[0]{$^*$Equal contribution}

\begin{abstract}
Score Distillation Sampling (SDS) has made significant strides in distilling image-generative models for 3D generation. However, its maximum-likelihood-seeking behavior often leads to degraded visual quality and diversity, limiting its effectiveness in 3D applications. In this work, we propose Consistent Flow Distillation (CFD), which addresses these limitations. We begin by leveraging the gradient of the diffusion ODE or SDE sampling process to guide the 3D generation. From the gradient-based sampling perspective, we find that the consistency of 2D image flows across different viewpoints is important for high-quality 3D generation. To achieve this, we introduce multi-view consistent Gaussian noise on the 3D object, which can be rendered from various viewpoints to compute the flow gradient. Our experiments demonstrate that CFD, through consistent flows, significantly outperforms previous methods in text-to-3D generation.
Project page: \url{https://runjie-yan.github.io/cfd/}.
\end{abstract}

\section{Introduction}

3D content generation has been gaining increasing attention in recent years for its wide range of applications. However, it is expensive to create high-quality 3D assets or scan objects in the real world. The scarcity of 3D data has been a primary challenge in 3D generation.
On the other hand, image synthesis has witnessed great progress, particularly with diffusion models trained on large-scale datasets with massive high-quality and diverse images.
Leveraging the 2D generative knowledge for 3D generation by model distillation has become a research direction of key importance.

Score Distillation Sampling~\citep{poole2022dreamfusion} (SDS) pioneered the paradigm. It uses a pretrained text-to-image diffusion model to optimize a single 3D representation such that the rendered views seek a maximum likelihood objective.
Several subsequent efforts~\citep{zhu2023hifa, liang2023luciddreamer, katzir2023noise, huang2023dreamtime, tang2023stable, wang2023taming, armandpour2023re} have been made to improve SDS, while the maximum-likelihood-seeking behavior remains, which has a detrimental effect on the visual quality and diversity.
Variational Score Distillation~\citep{wang2024prolificdreamer} (VSD) tackles this issue by treating the 3D representation as a random variable instead of a single point as in SDS. However, the random variable is simulated by particles in VSD. Single-particle VSD is theoretically equivalent to SDS~\citep{wang2023taming}, assuming the LoRA network in VSD is always trained to optimal. While the optimization-based sampling of VSD is $k$ times slower with $k$ particles.
\begin{figure}[p]
\centering

\subfigure[NeRFs generated by CFD from scratch. \label{fig:nerf-results}]{\includegraphics[width=0.95\linewidth]{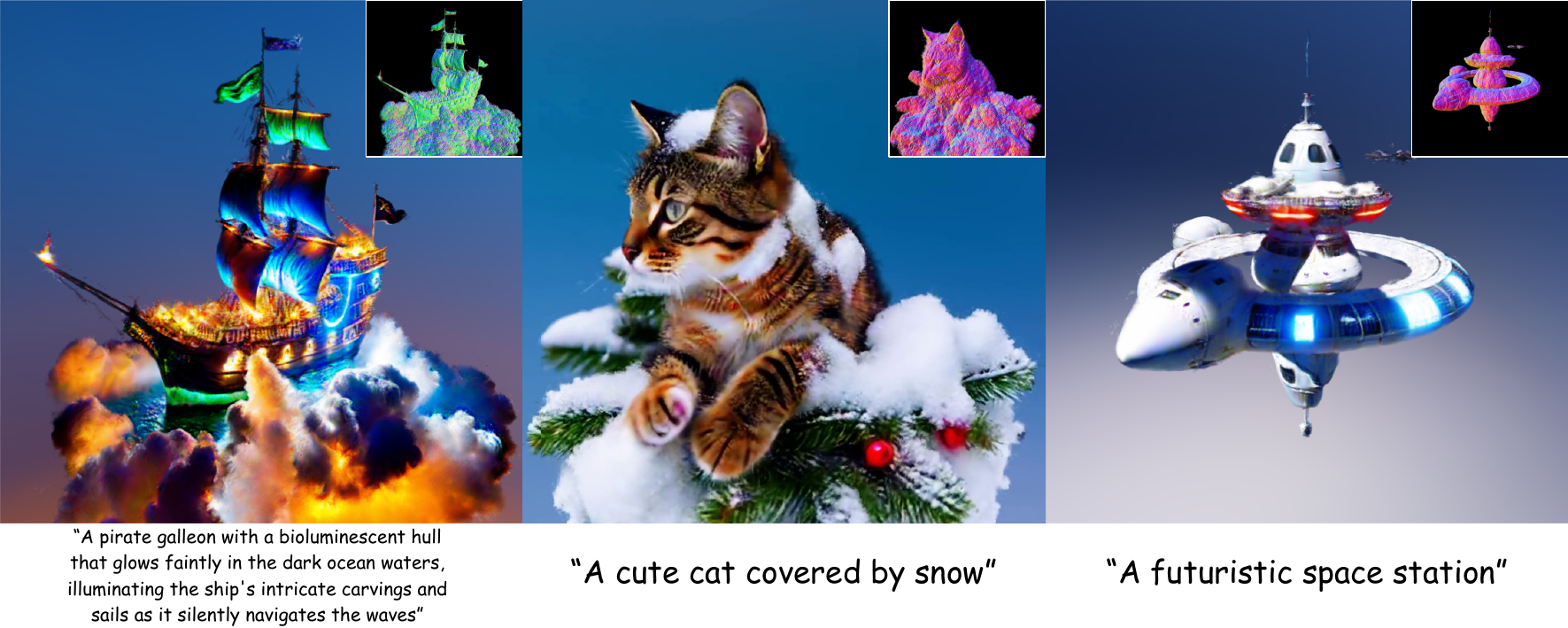}}

\subfigure[3D textured meshes generated by CFD from scratch.\label{fig:main-results}]{\includegraphics[width=0.95\linewidth]{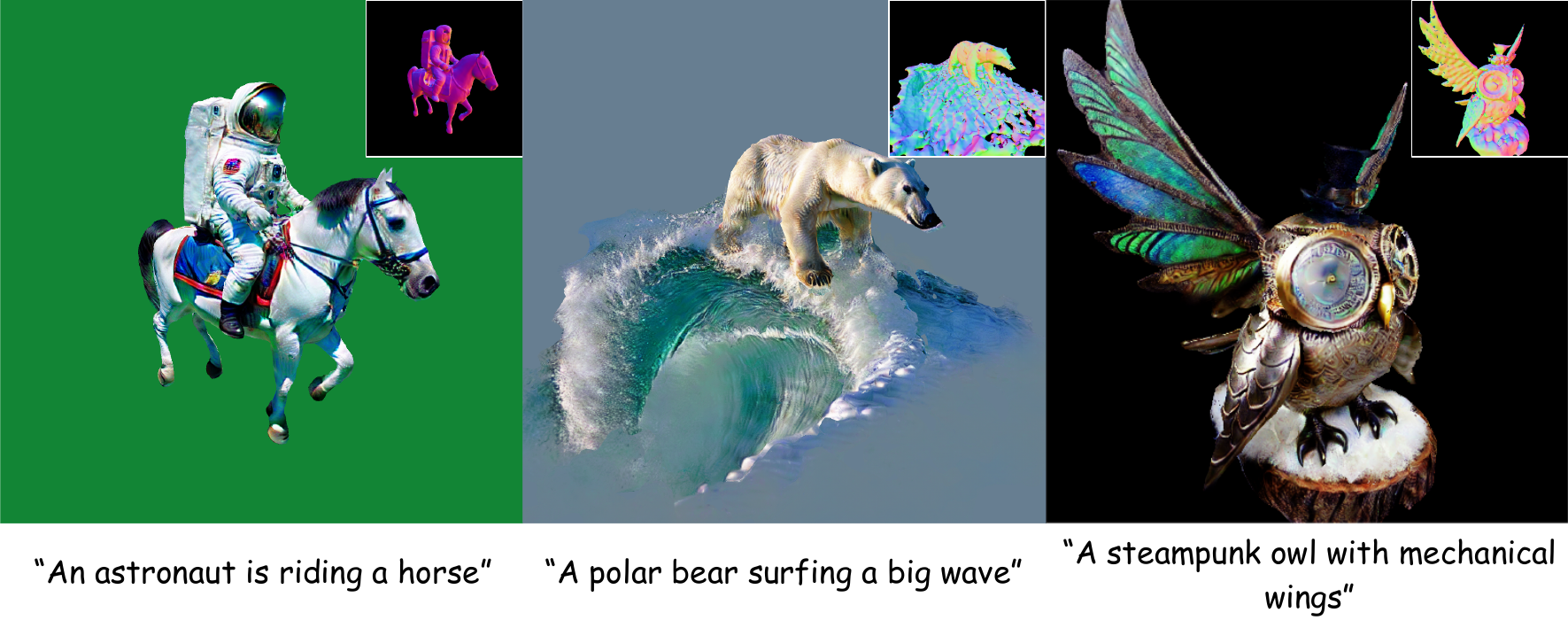}}

\subfigure[CFD can generate diverse and high-quality 3D samples from scratch.\label{fig:diverse-results}]{\includegraphics[width=0.95\linewidth]{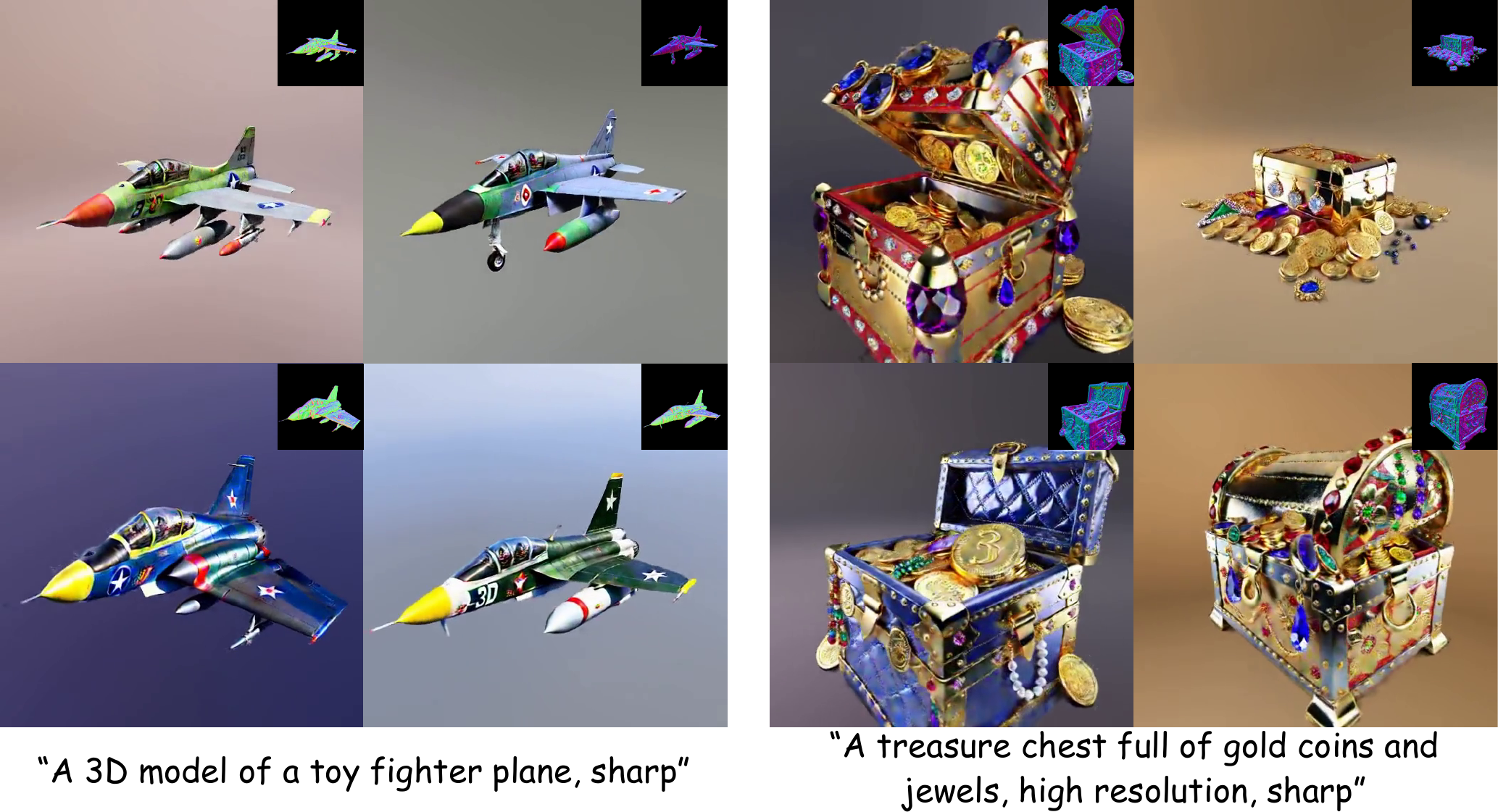}}
\caption{\textbf{Text-to-3D samples of CFD.} CFD can generate diverse 3D samples by distilling text-to-image diffusion models. See videos in our project page for additional generation results.} 
\label{fig:teaser}
\end{figure}

In this work, we propose Consistent Flow Distillation (CFD), which distills 3D representations through gradient-based diffusion sampling of consistent 2D image probability flows across different views.
We provide theoretical analysis of this process and extend it to a wide range of deterministic and stochastic diffusion sampling processes.
In the distillation process, we identify that a key is to apply consistent flows to the 3D representation. Intuitively, in 2D image generation, the same region is always associated with the same fixed noise for the correct flow sampling. Analogously, in 3D generation, the 2D image flows from different camera views should also use the noise patterns that are consistent on the object surface with correct correspondence.
To achieve this, we design a \textit{\dddcn} based on Noise Transport Equation~\citep{chang2024how}, which can compute the multi-view consistent noise with negligible cost.
During the distillation process, the \dddcn is rendered from different views to compute the gradient of 2D image flow.
Finally, our method can create high quality and diverse 3D objects by following the diffusion ODE or SDE sampling process.

We evaluate our method with different types of pretrained 2D image diffusion models, and compare it with state-of-the-art text-to-3D score distillation methods. Both qualitative and quantitative experiments show the effectiveness of our approach compared with prior works. Our method generates 3D assets with realistic appearance and shape (Fig.~\ref{fig:nerf-results}, \ref{fig:main-results}) and can sample diverse 3D objects for the same text prompt (Fig.~\ref{fig:diverse-results}) with negligible extra computation cost compared with SDS.

In summary, our main contributions are:
\begin{itemize}
    \item An in-depth discussion about using image diffusion PF-ODE or SDE to directly guide 3D generation. We present equivalent forms of the ODE and SDE so that their random variables are clean images at any time in the diffusion process, and identified that flow consistency is a key in this process.
    \item A \dddcn on the 3D object, that keeps pixel i.i.d. Gaussian property in any single view and has correct correspondence on the object surface between different views. 
    \item A method to distill image diffusion models for 3D generation. It is as simple and efficient as SDS while having significantly better quality and diversity.
\end{itemize}

\section{Preliminaries}

\subsection{Diffusion models and Probability Flow Ordinary Differential Equation (PF-ODE)}
A forward diffusion process~\citep{sohl2015deep,ho2020denoising} gradually adds noise to a data point $\bd{x}_0\sim p_0(\xt[0])$, such that the intermediate distribution $p_{t0}(\xt|\xt[0])$ conditioned on initial sample $\xt[0]$ at diffusion timestep $t$ is $\mathcal{N}(\alpha_t\xt[0], \sigma_t^2\bd{I})$, which can be equivalently written as
\begin{equation}
    \xt = \alpha_t \xt[0] + \sigma_t \bd{\epsilon}, \quad \bd{\epsilon} \sim \Normal,\label{eq:forward-diffusion}
\end{equation}
where $\alpha_0 = 1,\sigma_0 = 0$ at the beginning, and $\alpha_T\approx 0,\sigma_T\approx 1$ in the end, such that $p_T(\xt[T])$ is approximately the standard Gaussian $\NormalT$. A diffusion model $\bd{\epsilon}_\phi$ is learned to reverse such process, typically with the following denoising training objective~\citep{ho2020denoising}:
\begin{equation}
    \mathcal{L}_{\text{DM}}(\phi) = \E_{\bd{x}_0,\bd{\epsilon},t}[w_t ||\bd{\epsilon}_\phi(\xt, t)-\bd{\epsilon}||^2_2].
\end{equation}
After training, $\bd{\epsilon}_\phi(\xt, t)\approx -\sigma_t \nabla_{\xt} \log p_t(\bd{x}_t)$, where $\nabla_{\bd{x}_t} \log p_t(\bd{x}_t)$ is termed \textit{score function}.

A Probability Flow Ordinary Differential Equation (PF-ODE) has the same marginal distribution as the forward diffusion process at any time $t$~\citep{song2020score}. The PF-ODE can be written as:
\begin{align}
  \frac{\dd (\xt/\alpha_t) }{\dd t} 
  &= \frac{\dd (\sigma_t/\alpha_t)}{\dd t} \left( -\sigma_t \nabla_{\xt} \log p_t(\bd{x}_t) \right)\\
  &= \frac{\dd (\sigma_t/\alpha_t)}{\dd t} \bd{\epsilon}_\phi(\xt, t),
  \quad
  \xt[T] \sim p_T(\xt[T]). \label{eq:pf-ode-simple}
\end{align}

A data point $\xt[0]$ can be sampled by starting from a Gaussian noise $\xt[T] \sim \NormalT$ and following the PF-ODE trajectory from $t=T$ to $t=0$, typically with discretized timesteps and an ODE solver.

\subsection{Differentiable 3D Representations}
Differentiable 3D representations are typically parameterized by the learnable parameters $\theta$ and a differentiable rendering function $\gc$ to render images corresponding to the camera views $\bd{c}$. In many tasks, the gradient is first obtained on the rendered images $\gc$ and then backpropagated through the Jacobian matrix $\Jgc$ of the renderer to the learnable parameters $\theta$.

Common 3D neural representations include Neural Radiance Field (NeRF) \citep{mildenhall2021nerf, muller2022instant,wang2021neus, barron2021mipnerf, xu2022point}, 3D Gaussian Splatting (3DGS) \citep{kerbl20233d}, and Mesh~\citep{Laine2020diffrast, shen2021dmtet}. In this work, we perform experiments on various 3D representations and validate that our method is applicable for generation across a wide range of 3D representations.

\section{Consistent Flow Distillation}

We present Consistent Flow Distillation (CFD), which takes a pretrained and frozen text-to-image diffusion model and distills a 3D representation by the gradient from the probability flow of the 2D image diffusion model.
We propose to guide 3D generation with 2D clean flow gradients operating jointly on a 3D object. 
We identify that a key in this process is to make the flow guidance consistent across different camera views (see Sec.~\ref{sec:3d_from_2d_flow}). We further propose an SDE, a generalization of the clean flow ODE, that incorporates noise injection during optimization to enhance generation quality (see Sec.~\ref{sec:sde}). To achieve the consistent flow, we propose an algorithm to compute a \dddcn, which provides noise for different views with noise texture exactly aligned on the surface of the 3D object (see Sec.~\ref{sec:3d_consistent_noise}). Finally, we draw connections between CFD and other score distillation methods (see Sec.~\ref{sec:cfd_compare}).

\subsection{3D Generation with 2D Clean Flow Gradient}
\label{sec:3d_from_2d_flow}

Given a pretrained text-to-image diffusion model $\bd{\epsilon}_\phi(\bd{x}_t, t, y)$, let $y$ denote the condition (text prompt), the conditional distribution $p(\bd{x}_0 | y)$ can be sampled from the PF-ODE~\citep{song2020score} trajectory from $t=T$ to $t=0$, which takes the form
\begin{align}
  \dd (\frac{\xt}{\alpha_t}) &= \colorA{\underbrace{{\dd (\frac{\sigma_t}{\alpha_t})}}_{-lr}} \cdot \colorB{\underbrace{{\bd{\epsilon}_\phi(\bd{x}_t, t, y)}}_{\nabla \mathcal{L}}}.
  \label{eq:pf-ode-color}
\end{align}
By following the diffusion PF-ODE, pure Gaussian noise is transformed to an image in the target distribution $p(\bd{x}_0|y)$. 
Thus PF-ODE can be interpreted as guiding the refinement of a noisy image to a realistic image. Can we use image PF-ODE to directly guide the generation of a differentiable 3D representation $\theta$ through the refining process, with $\theta$ as its learnable parameters and $\bd{g}_\theta$ as its differentiable rendering function? 

A direct implementation can be substituting the noisy images in Eq.~\ref{eq:pf-ode-color} with the rendered images $\gc$ at the camera view $\bd{c}$ by letting $\frac{\bd{x}_t}{\alpha_t} = \gc$. By viewing $\dd (\frac{\sigma_t}{\alpha_t})$ as the learning rate $lr$ of an optimizer and ${\bd{\epsilon}_\phi(\bd{x}_t, t, y)}$ as the loss gradient to $\frac{\xt}{\alpha_t}$, the gradient can be backpropagated through the Jacobian matrix of the renderer $\gc$ to update $\theta$ according to
\begin{align}
  \Delta \theta &= \colorA{-lr} \cdot \colorB{\bd{\epsilon}_\phi(\alpha_t \gc, t, y)} \Jgc. \label{eq:noisy-guide-color}
\end{align}

However, such a direct attempt may not work (see Fig. \ref{fig:ablation-noise} (a)), since the image $\xt$ at diffusion timestep $t$ contains Gaussian noise. It is hard for the images rendered by a 3D representation to match the noisy images $\frac{\xt}{\alpha_t}$ in an image PF-ODE, particularly around the beginning $t=T$, where $\xt[T]$ is per-pixel independent Gaussian noise. It is generally impossible for a continuous 3D representation to be rendered as per-pixel independent Gaussian noise from all camera views simultaneously. As a result, the rendered views may be out-of-distribution (OOD) as the input to the pretrained image diffusion model, and therefore cannot get meaningful gradient as guidance.

To resolve the OOD issue, we use a \textit{change-of-variable}~\citep{gu2023boot,yan2024fsd} to transform the original \textit{noisy variable} $\xt$ in PF-ODE (Eq.~\ref{eq:pf-ode-color}) to a new variable that is free of Gaussian noise at any time $t\in [0, T]$.
For each trajectory $\{\xt\}_{t\in [0, T]}$ of the $\xt$ in the original PF-ODE, the new variable $\xo$ is defined as
\begin{align}
  \xo \triangleq \frac{\xt - \sigma_t \epsf}{\alpha_t}, \label{eq:change-of-variable}
\end{align}
where $\epsf$ is set as the initial noise $\epsf = \frac{\bd{x}_T}{\sigma_T}$ and is a constant for each ODE trajectory $\{\xt\}_{t\in [0, T]}$. By Eq.~\ref{eq:pf-ode-color} and Eq.~\ref{eq:change-of-variable}, the evolution of the new variable $\xo$ is derived as follows:
\begin{align}
  \dd \xo = \colorA{\underbrace{{\dd (\frac{\sigma_t}{\alpha_t})}}_{-lr}} \cdot \colorB{\underbrace{\big(\bd{\epsilon}_\phi(\alpha_t \xo + \sigma_t \epsf, t, y)-\epsf\big)}_{\nabla \mathcal{L}}}. \label{eq:dxo-clean-flow}
\end{align}
Changing the variable $\xt$ of the original diffusion PF-ODE to the variable $\xo$ makes directly using PF-ODE as a 3D guidance possible by providing the following properties (the proof is in Appx.~\ref{app:subsec:prop-clean}): 
(i) $\xo$ are clean images for all $t \in [0,T]$ (see Appx.~Fig.~\ref{app:fig:vis-variable-clean}), 
therefore, it can be substituted with the rendered clean images $\gc$.
(ii) $\xo$ is initialized from zero: $\hat{\bd{x}}^{\text{c}}_T=\bd{0}$, which can be consistent with the 3D representation initialization (e.g. NeRF, where the entire scene is initialized to a uniform gray). 
(iii) The endpoint of the new ODE trajectory $\hat{\bd{x}}^{\text{c}}_0 = \bd{x}_0$ is a sample following the target distribution $p_0(\bd{x}_0)$ and is completely determined by the constant $\epsf$ (thus $\epsf$ can be viewed as the identity of the trajectory).
The new variable $\xo$ is therefore termed \textit{clean variable}.
Note that $\xo$ is different from the ``\textit{sample prediction}'' $\xc \triangleq \frac{\xt - \sigma_t \boldsymbol{\epsilon}_\phi(\boldsymbol{x}_t, t, y)}{\alpha_t}$ of diffusion network for $\xt$, which is not directly usable in this framework and we discuss for more details in Appx.~\ref{appx:sec:discuss-variable}. 
We use \textit{\ddflow} to denote the ODE (Eq.~\ref{eq:dxo-clean-flow}) of the clean variable $\xo$.

\begin{figure}
  \centering
  \includegraphics[width=\textwidth]{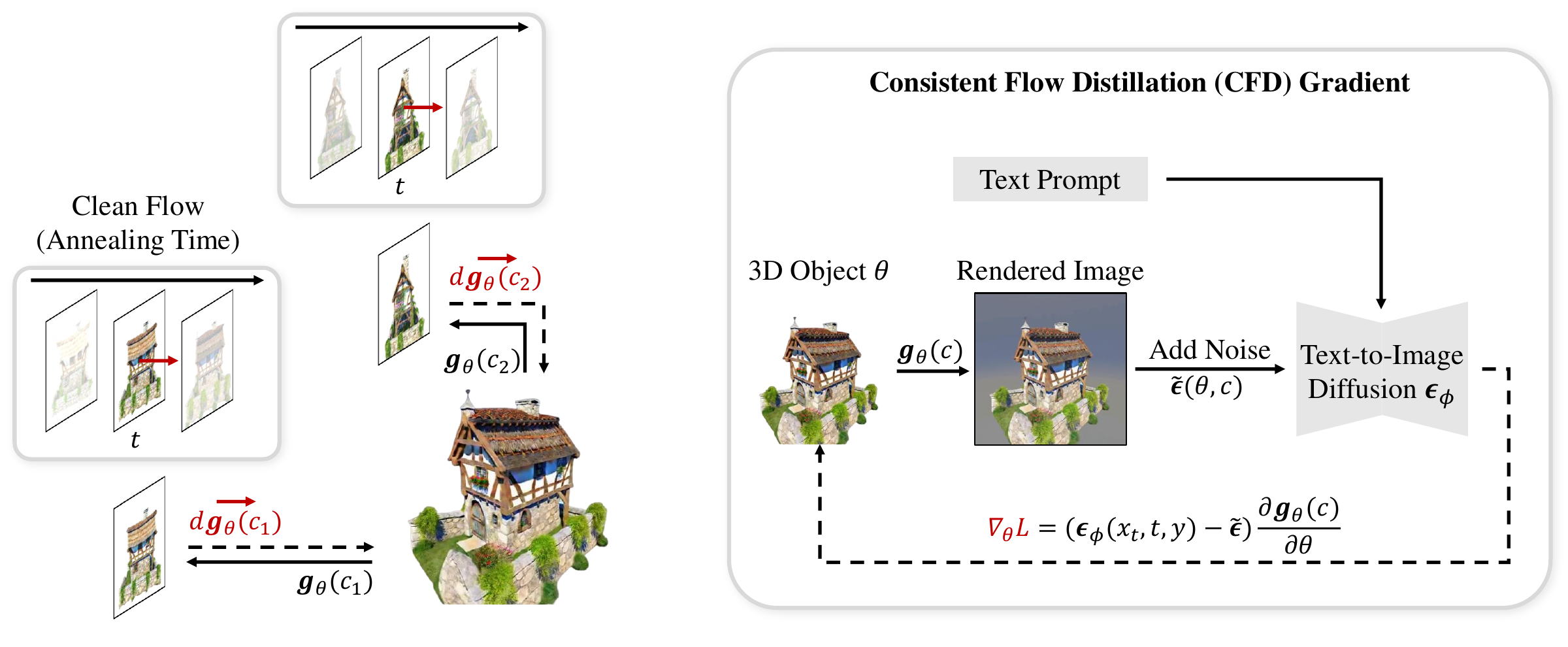}
  \caption{\textbf{Overview of CFD.} The 3D representation $\theta$ is generated with decreasing timesteps. At each timestep $t$, different views $g_{\theta}(c)$ are rendered. The 2D image \ddflow provides the gradient at timestep $t$ to the views and backpropagates to $\theta$. The right shows the gradient computation in detail: we add a multi-view consistent noise (see Fig.~\ref{fig:main-fig-warp}) to the rendered image and pass it into the frozen text-to-image diffusion model, gradient is calculated using the model prediction and then backpropagated to $\theta$.}
  \label{fig:main-fig-method}
\end{figure}

Similar to Eq.~\ref{eq:noisy-guide-color}, we use the following gradient to update the 3D representation $\theta$:
\begin{align}
  \nabla_\theta\Lcfd
  = \E_{c} \left[\colorB{{\big(\bd{\epsilon}_\phi(\alpha_t \gc + \sigma_t \ecc, t, y)-\ecc\big)}} \frac{\partial \gc}{\partial \theta} \right], \label{eq:cfd}
\end{align}
where $t=t(\tau)$ is a predefined monotonically decreasing timestep annealing function of the optimization time $\tau$, and $\ecc$ is a \dddcn function, we discuss its design details in Sec.~\ref{sec:3d_consistent_noise}. 
We let $\ecc$ be a deterministic function of $\theta$ and $c$, ensuring that the noise remains constant for a fixed camera view and geometry, given that $\epsf$ is constant for a single flow trajectory in clean flow ODE.
Since we have a set of 2D image flows jointly operating on a 3D object, the gradient updates from different camera views in Eq.~\ref{eq:cfd} may interfere with each other.
We identify that a key in the 3D sampling process is to make the 2D image flows consistent on the 3D object surface. This requires a \dddcn function $\ecc$ that is not only view-dependent but also provides the correct local correlation on the object surface. The \dddcn function should apply a similar noise pattern to the same region of the object surface, even from different camera views. This corresponds to that the fixed noise pattern is always added to the same region for the clean variable in 2D image clean flow ODE. 
The overall process of CFD is summarized in Fig.~\ref{fig:main-fig-method}.

\subsection{Guiding 3D Generation with Diffusion SDE}\label{sec:sde}
Despite that PF-ODE and diffusion SDE can recover the same marginal distributions in theory, SDE-based stochastic sampling may result in better generation quality as reported in prior works~\citep{song2020score, song2020denoising, karras2022elucidating}. Motivated by this, we also propose to use image diffusion SDE to guide 3D generation.

To achieve this, we propose a reverse-time SDE with a form similar to the clean flow ODE (Eq.~\ref{eq:dxo-clean-flow}):
\begin{align}
    \begin{cases}
        \dd \xo = \colorA{\underbrace{\big(\dd (\frac{\sigma_t}{\alpha_t}) + \frac{\sigma_t}{\alpha_t} \beta_t \dd t\big)}_{-lr}} \cdot \colorB{\underbrace{\big(\boldsymbol{\epsilon}_\phi(\alpha_t \xo + \sigma_t \epsf_t, t, y) - \epsf_t\big)}_{\nabla \mathcal{L}}},\\
        \dd \epsf_t = \epsf_t \beta_t \dd t + \sqrt{2 \beta_t} \dd \bar{\bd{w}}_t,
    \end{cases}\label{eq:reverse-clean-flow-sde}
\end{align}
with initial condition $\xo[T]=\bd{0}$ and $\epsf_T\sim\Normal$, where $\bar{\bd{w}}_t$ is a standard Wiener process in the reverse time from $T$ to $0$. 
It can be further proved that this SDE and its forward-time form are equivalent to the diffusion SDE presented by Song~et~al.~\citep{song2020score} and EDM~\citep{karras2022elucidating}.
When we set $\beta_t=0$, the SDE becomes deterministic and becomes the clean flow ODE. When $\beta_t\neq 0$, new Gaussian noise will be injected into $\epsf_t$ during the diffusion process, but  $\epsf_t$ is still of unit variance throughout the whole process from $T$ to $0$. 
Furthermore, $\xo$ in this SDE still retains the ``clean properties'' of $\xo$ in the clean flow ODE. Thus, we also use clean flow to refer to this SDE.
We provide detailed discussions and proofs about this SDE in Appx.~\ref{app:sec:clean-flow-sde}

The clean flow SDE implies that a simple modification on Eq.~\ref{eq:cfd} can make $\nabla_\theta\Lcfd$ correspond to using SDE guidance. As detailed in Appx.~\ref{app:subsec:close}, we only need to inject new Gaussian noise into $\ecc$ during optimization by:
\begin{align}
    \epsf (\tau+1) = \sqrt{1-\gamma} \epsf (\tau) + \sqrt{\gamma} \bd{\epsilon}, \label{eq:fresh-noise}
\end{align}
where $\gamma$ is a predefined noise injection rate, $\tau$ is the optimization step, and $\bd{\epsilon}\sim\Normal$ is sampled at each optimization step.

\begin{figure}
  \centering
  \includegraphics[width=0.7\textwidth]{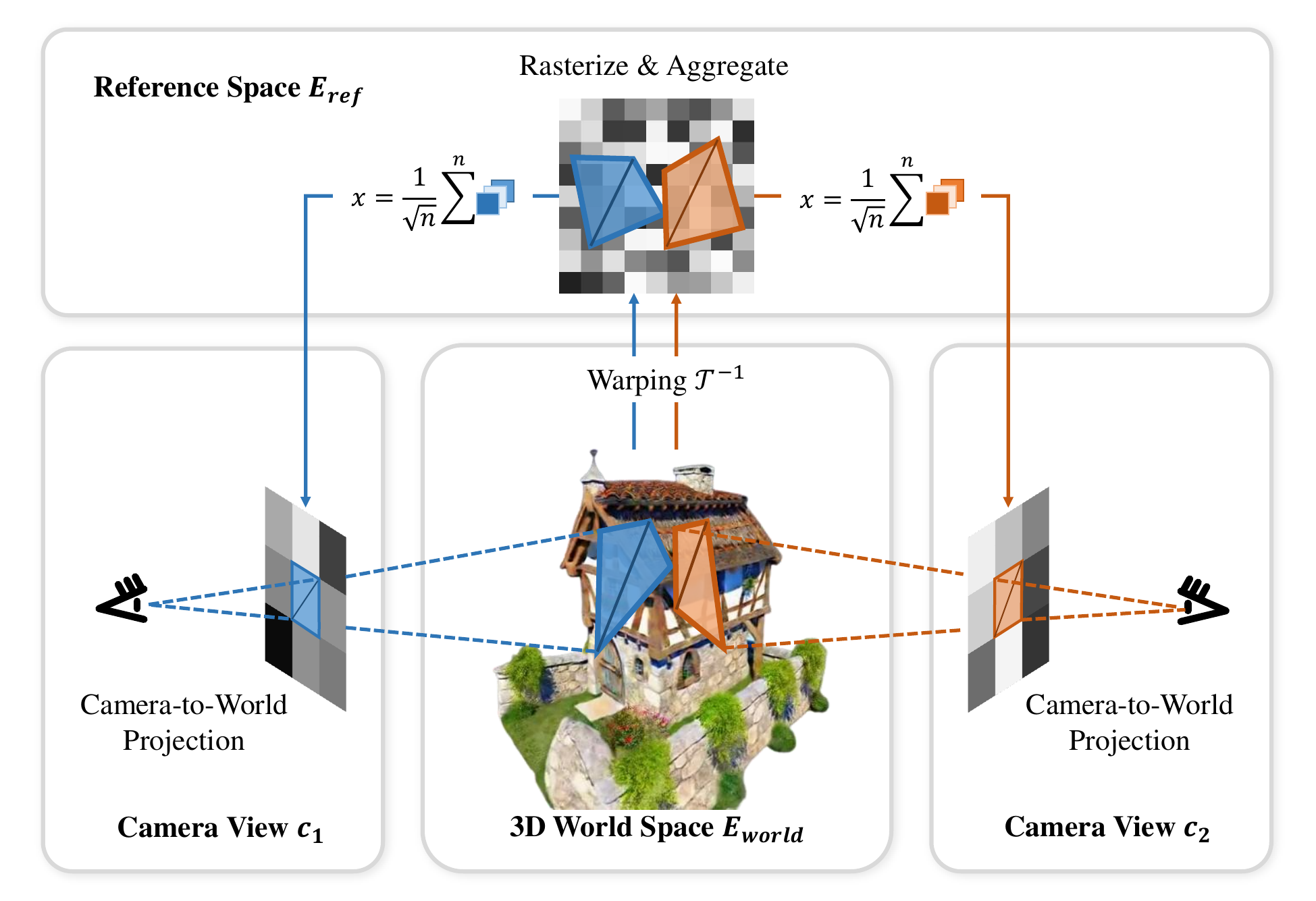}
  \caption{\textbf{Warping consistent noise for query views.} To obtain a query view noise map, for each pixel, its vertices are projected onto the object surface, then wrapped to the coordinates in a high-resolution noise map. The values within the region specified by the coordinates on the high-resolution noise map are summed and normalized as the return pixel value in the query view noise.}
  \label{fig:main-fig-warp}
\end{figure}

\subsection{Multi-view Consistent Gaussian Noise $\epsf$}
\label{sec:3d_consistent_noise}

To get consistent flow, a \dddcn function $\ecc$ is required, which (i) is a per-pixel independent Gaussian noise for all camera views $\bd{c}$; (ii) the noise patterns from different views have the correct correspondence according to the 3D object surface.
It is non-trivial to satisfy all these properties with common warping and interpolation methods. The query rays from camera views $\bd{c}$ take continuous coordinates, simply using common interpolation methods such as bilinear may break the per-pixel independent property and result in bad quality (see Fig.~\ref{fig:ablation-noise} (b)).

Inspired by Integral Noise~\citep{chang2024how}, we develop an algorithm that implements the \dddcn with Noise Transport Equation. The Noise Transport Equation was originally proposed for warping noise between two frames in a video ~\citep{chang2024how}.
To use it in the 3D task, we generalize the Noise Transport Equation to the warping between two different manifolds and compute the warping from different query camera views to the same reference space $E_{ref}$.
As shown in Fig.~\ref{fig:main-fig-warp}, given a camera view $\bd{c}$, the query pixel $\bd{p}$ is first projected onto the surface of the object as camera-to-world $\ctw(\bd{p})$ in the world space $\worldspace$, then we map those points from the surface to a reference space $\refspace$ through a predefined mapping function $\mapfn$ (design details are in Appx. \ref{app:sec:algo}). We define a high-resolution Gaussian noise map $W$ on $E_{ref}$. Finally, we aggregate and return the noise value $G(\bd{p})$ for the query pixel $\bd{p}$ according to
\begin{align}
    G(\bd{p}) = \frac{1}{\sqrt{|\Omega_{\bd{p}}|}} \sum_{A_i \in \Omega_{\bd{p}}} W(A_i), \label{eq:noise-transport}
\end{align}
where $\Omega_{\bd{p}} = \mapfn(\ctw(\bd{p}))$ is the area covered by $\bd{p}$ after $\bd{p}$ being warped to $\refspace$, $A_i$ is a noise cell in $\refspace$, and $W(A_i)$ is the noise value of unit variance at $A_i$. 
By first projecting query pixels from different camera views to the object surface in the world space $\worldspace$, two query pixels $\bd{p}_1$, $\bd{p}_2$ from two different camera views that look at the same region on the object will be projected to overlapped regions $\ctw[1](\bd{p}_1)$, $\ctw[2](\bd{p}_2)$ on the object. After being warped by the same function $\mapfn$, they cover overlapped regions $\Omega_{\bd{p}_1}$, $\Omega_{\bd{p}_2}$ and get correct correlation in noise maps $G(\bd{p}_1)$, $G(\bd{p}_2)$.

Our method can be also viewed as deriving a rendering function for the noisy variable $\xt$ in the original form of PF-ODE (Eq.~\ref{eq:pf-ode-color}) by
\begin{align}
    \xt(\theta, \bd{c}) = \alpha_t \gc + \sigma_t \ecc. \label{eq:noisy-redner}
\end{align}
As discussed in Integral Noise~\citep{chang2024how}, the warping of an image $\gc$ follows the transport equation that takes a similar form of Eq.~\ref{eq:noise-transport}, but with a different denominator $|\Omega_{\bd{p}}|$, instead of $\sqrt{|\Omega_{\bd{p}}|}$ for $\ecc$, thus common 3D representation is incapable of rendering Gaussian Noise $\ecc$, and it is needed to disentangle the noisy variable into the clean part $\gc$ and noisy part $\ecc$. By disentanglement, we can handle the two parts that follow different rendering equations separately and achieve the rendering of the noisy variable for using image PF-ODE (or diffusion SDE) as the guidance for 3D generation.

\subsection{Comparison with Other Score Distillation Methods}
\label{sec:cfd_compare}

\textbf{Comparison with SDS.} Both SDS and our CFD share a similar gradient form $\big(\bd{\epsilon}_\phi(\xt, t, y)-\epsf\big) \frac{\partial \gc}{\partial \theta}$ to update the 3D representation $\theta$ from a sampled rendered view. 
In SDS, $t$ is typically randomly sampled from a range $[t_{\text{min}}, t_{\text{max}}]$, and $\epsf$ is a noise randomly sampled at each step. In contrast to SDS, our CFD uses an annealing timestep $t(\tau)$ that decreases from $t_{\text{max}}$ to $t_{\text{min}}$, the deterministic noise $\epsf(\theta, c)$ depends on both the object surface and the camera view, it is designed to let the noise from different views have correct correspondence according to the object surface. Notably, SDS with annealing timestep schedule can be viewed as setting $\gamma=1$ in CFD, where significant stochasticity is injected in the optimization. As a comparison, for typical diffusion sampling processes, $\gamma\approx 0.00024$ in DDPM, and $\gamma=0$ in DDIM (see Appx.~\ref{app:subsec:ddpm}). 
In our CFD, the definition of $\gamma$ requires that $\gamma<1$ (Appx. Eq.~\ref{app:eq:gamma-leq1}), which implies a difference between CFD and SDS.

\begin{wraptable}{r}{0.62\textwidth}
  \centering
  \begin{minipage}{\linewidth}
    \centering
    \begin{tabular}{ccc}
      \toprule
      & loss gradient & noising method \\ 
      \midrule
      SDS & $\bd{\epsilon}_\phi(\xt) - \bd{\epsilon}$ & $\bd{\epsilon}\sim \Normal$  \\
      VSD & $\bd{\epsilon}_\phi(\xt) -\bd{\epsilon}_{\text{lora}}(\xt)$ & $\bd{\epsilon}\sim \Normal$  \\
      ISM & $\bd{\epsilon}_\phi(\xt) -\bd{\epsilon}_{\phi}(\xt[s])$ & $\text{DDIM inversion}(\gc)$ \\
      CFD \small{(ours)} & $\bd{\epsilon}_\phi(\xt) - \epsf$ & $\epsf=\epsf_t(\theta, \bd{c})$\\ 
      \bottomrule
    \end{tabular}
    \caption{Comparison between score distillation gradients.}
    \label{tab:loss-compare}
  \end{minipage}
\end{wraptable}
Theoretically, when restricted to 2D image generation where $\bd{x}=\gc$, SDS is equivalent to seeking the maximum likelihood point in the noisy distribution $p_t$ with a Gaussian distribution $\mathcal{N}(\alpha_t \bd{x}, \sigma_t^2 \bd{I}) $ centered at the image $\bd{x}$. When the optimization of SDS loss is near optimal, their generation results are centered around a few modes~\citep{poole2022dreamfusion}. In contrast, our CFD is sampling from the whole distribution $p_0$ and equivalent to a diffusion ODE or SDE sampling process with first-order discretization. Thus, CFD can generate more diverse results with better quality.

\textbf{Comparison with other score distillation methods.} We list the loss and noising of different methods in Tab.~\ref{tab:loss-compare}. 
ISM~\citep{liang2023luciddreamer} incorporates DDIM inversion noising in their score distillation. While this approach can yield finer details than SDS, computing the inversion significantly increases computational costs. We discuss the connection between our method and ISM in Appx.~\ref{appx:sec:discuss-variable}.
We also list the difference between proposed pipeline and different baseline methods in Appx.~\ref{app:subsec:comp}.

\begin{figure}[t]
\begin{minipage}{\textwidth}
  \centering
  \includegraphics[width=0.9\textwidth]{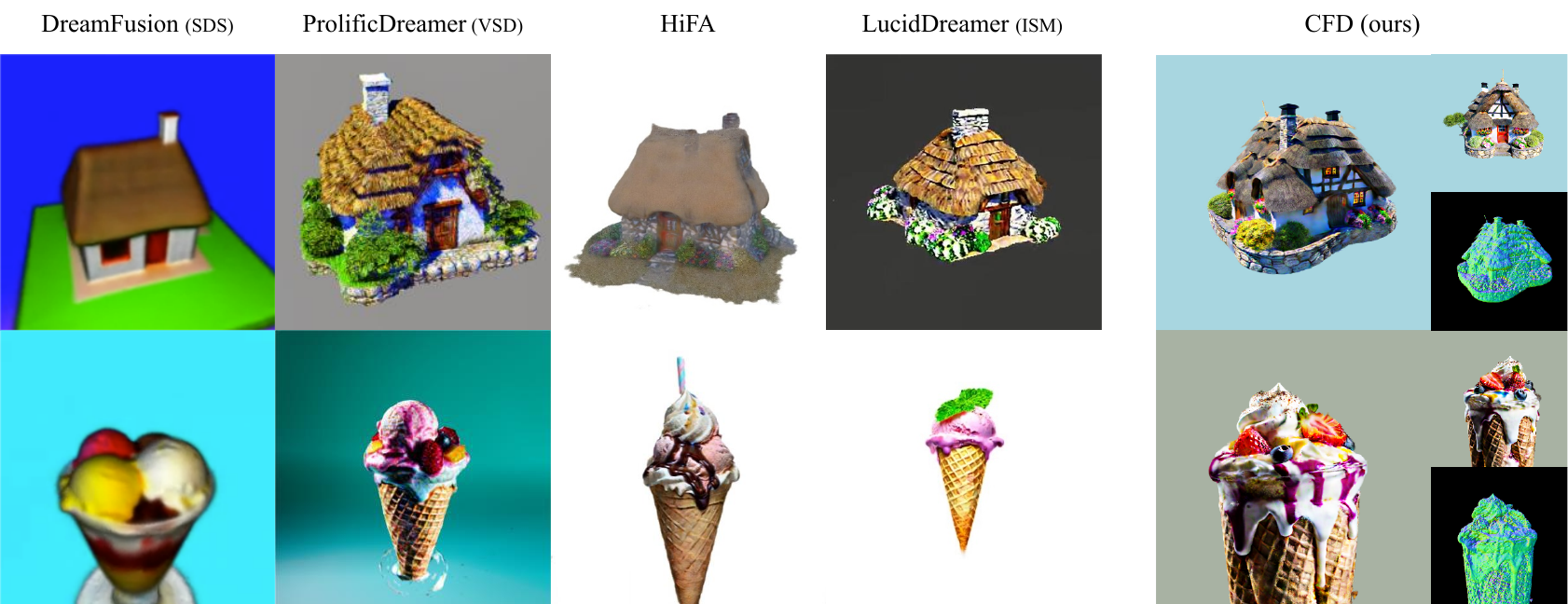}
  \caption{\textbf{Visual comparison to baseline methods.} We compare rendered images of our method with baselines include DreamFusion~\citep{poole2022dreamfusion}, ProlificDreamer~\citep{wang2024prolificdreamer}, HiFA~\citep{zhu2023hifa}, LucidDreamer~\citep{liang2023luciddreamer}. The images of baselines are from their official implementations. Prompts: ``A 3D model of an adorable cottage with a thatched roof'' (top) and ``A DSLR photo of an ice cream sundae'' (bottom).}
  \label{fig:compare-all}
\end{minipage}
\end{figure}
\section{Experiments}
In comparisons to prior methods, we distill Stable Diffusion~\citep{rombach2022high} and use the same codebase threestudio~\citep{threestudio2023}. We compare CFD with various prior state-of-the-art methods, including SDS~\citep{poole2022dreamfusion, wang2023score}, VSD~\citep{wang2024prolificdreamer}
and ISM~\citep{liang2023luciddreamer}.
Specifically, VSD incorporates LoRA network training in their score distillation, ISM incorporates DDIM inversion in their score distillation.
Since timestep annealing~\citep{zhu2023hifa, wang2024prolificdreamer, huang2023dreamtime} has been shown to help improve generation quality~\citep{wang2024prolificdreamer, zhu2023hifa, huang2023dreamtime}, we also apply timestep annealing to all baseline methods.
We use results from the official implementation of other baselines in qualitative comparisons if not specified.
In addition, we show results of a 2-stage pipeline in Fig.~\ref{fig:nerf-results}, ~\ref{fig:main-results}, where we first distill MVDream~\citep{shi2023mvdream}, then distill Stable Diffusion, which alleviates the multi-face issue~\citep{poole2022dreamfusion, armandpour2023re, hong2023debiasing}.
We provide implementation details in Appx.~\ref{app:sec:impl} and details of experiment metrics in Appx.~\ref{app:sec:exp}.

\subsection{Comparison with Baselines}

\begin{wraptable}{r}{0.45\textwidth}
  \centering
  \begin{minipage}{\linewidth}
    \centering
    \begin{tabular}{ccc}
      \toprule
      & 3D-FID $\downarrow$ & 3D-CLIP $\uparrow$ \\ 
      \midrule
      SDS & 88.06  & 35.07$\pm$0.20  \\
      ISM & 86.00 & 34.99$\pm$0.26 \\
      VSD & 83.02  & 35.10$\pm$0.20  \\
      CFD (ours) & \textbf{78.13 } & \textbf{35.16$\pm$0.23} \\ 
      \bottomrule
    \end{tabular}
    \caption{\textbf{Comparison with baselines on quality, diversity and prompt alignment.} We report averaged clip score of different verison of CLIP backbones. We use 10 seeds for each of the 10 different prompts, respectively.}
    \label{tab:sd-fid-is}
    \vspace{-10pt}
  \end{minipage}
\end{wraptable}

We compute 3D-FID following VSD~\citep{wang2024prolificdreamer} to evaluate the quality and diversity of different score distillation methods, and compute 3D-CLIP to evaluate prompt alignment for different methods. We provide qualitative comparison in Fig.~\ref{fig:compare-all} and quantitative results in Tab.~\ref{tab:sd-fid-is}, \ref{tab:mv-vs-score}, and Appx. Tab.~\ref{tab:mv-clip}. We also provide additional comparisons with VSD in Appx.~Fig.~\ref{app:fig:cmp-prolific}, ISM in Appx.~Fig.~\ref{app:fig:cmp-lucid}, and SDS in Appx.~Fig.~\ref{fig:compare-exp-mv}. 
As shown in both quantitative and qualitative results, CFD outperforms all baseline methods and has better generation quality (Fig.~\ref{fig:compare-all} and Appx. Fig.~\ref{app:fig:cmp-prolific}, \ref{app:fig:cmp-lucid}, \ref{fig:compare-exp-mv}) and diversity (Appx. Fig.~\ref{app:fig:cmp-prolific}, \ref{app:fig:cmp-lucid}, \ref{fig:compare-exp-mv}). Our method produces rich details and the results are more photorealistic. Addition results and comparisons are in Appx.~\ref{app:sec:addition}. 

\subsection{Ablation Studies}

\paragraph{Ablation on the flow space.} As shown in Fig.~\ref{fig:ablation-noise}:  (a) When directly training $\theta$ with original PF-ODE using Eq. \ref{eq:noisy-guide-color} with noisy variable, the training fails after several iterations. (b) Simply using bilinear interpolation instead of Noise Transport Equation leads to correlated pixel noise and generates blurry results. (c) When using the random noise as in SDS, the results are over-smoothed. (d) Our consistent flow distillation with \dddcn generates high-quality results. By using a \dddcn, the flow for a fixed camera is more aligned with a diffusion sampling process, and the quality improves. We also provide additional ablations on our design choices in Appx.~\ref{app:sec:abla}.

\begin{figure}[t]
\centering
\subfigure[Original PF-ODE]{\includegraphics[width=0.22\linewidth]{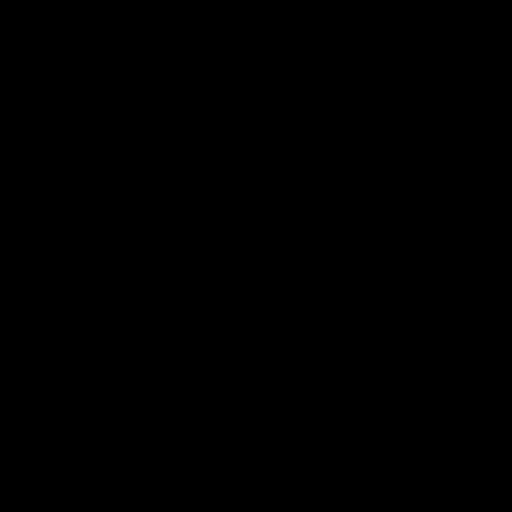}}
\hfill
\subfigure[w/ bilinear noise]{\includegraphics[width=0.22\linewidth]{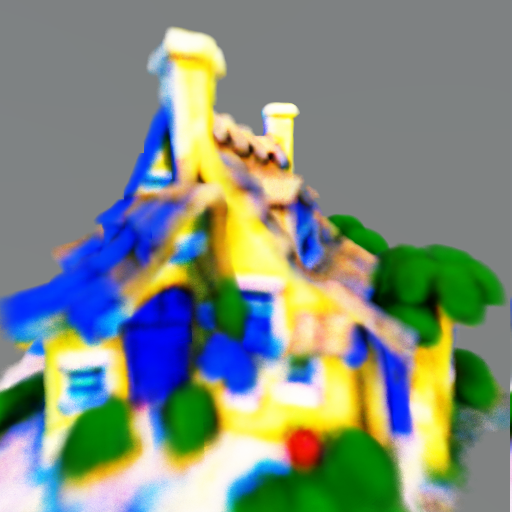}}
\hfill
\subfigure[w/ random noise]{\includegraphics[width=0.22\linewidth]{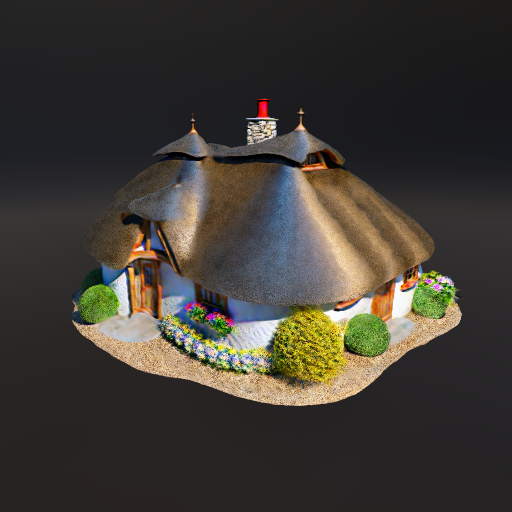}}
\hfill
\subfigure[w/ consistent noise]{\includegraphics[width=0.22\linewidth]{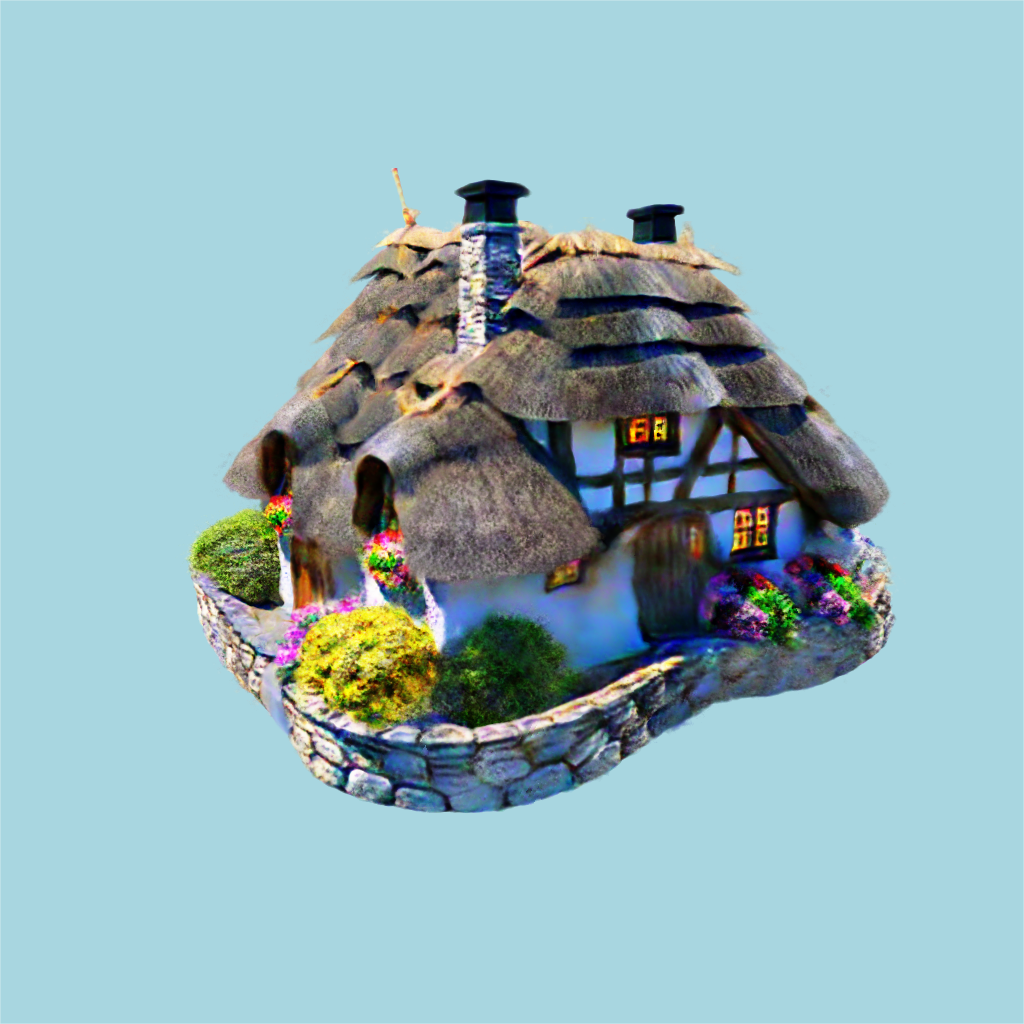}}

\caption{\textbf{Ablation on the noise design and the flow space.} (a) Directly training $\theta$ with original PF-ODE using Eq. \ref{eq:noisy-guide-color} with noisy variable. (b) Distilling with bilinear-interpolated noise map. (c) Distilling with random noise. (d) Distilling with our \dddcn, which has the best visual quality.} 
\label{fig:ablation-noise}
\end{figure}

\begin{table}[t]

  \begin{minipage}{\linewidth}
    \centering
    \begin{tabular}{ccc}
    \toprule
      
    Ranker  & 
    Aesthetics & 
    PickScore\\      \midrule
      
    Ours vs. SDS & 
    0.54 & 
    0.64\\ 
      
    Ours vs. VSD & 
    0.60 & 
    0.68\\
      
    Ours vs. ISM & 
    0.56 & 
    0.66 \\ 
      
    Ours vs. FSD & 
    0.54 & 
    0.78 \\ 
      \bottomrule
    \end{tabular}
    \caption{\textbf{Automated win rates comparison under reward models.} We compare the performance of our CFD method against baseline models using Aesthetics Scores~\citep{schuhmann2022laionaesthetics} and PickScores~\citep{kirstain2023pick}. Our method consistently achieves a winning rate higher than 0.5, which demonstrates its effectiveness.}
    \label{tab:mv-vs-score}
    \end{minipage}
\end{table}

\paragraph{Ablation on noise injection rate $\gamma$.} Noise injection rate $\gamma$ in Eq.~\ref{eq:fresh-noise} determines the rate at which new noise will be injected into the noise function. When $\gamma=0$, no noise will be injected, $\epsf$ will be fixed constant if the geometry and camera view is fixed and CFD corresponds to using ODE guidance. When $\gamma>0$, new noise will be injected, and $\ecc$ will gradually change. In this case, CFD corresponds to using SDE guidance.
Using SDE-based stochastic samplers may help to improve image generation quality as reported in prior works~\citep{song2020score, song2020denoising, karras2022elucidating}. In Tab.~\ref{app:tab:sde-abl}. We also observe that use a small nonzero $\gamma$ helps to improve the performance of CFD.
In practice, we found that using a $\gamma$ larger than $0.0001$ could result in over-smoothed texture, therefore we set $\gamma=0.0001$ by default in our experiments for CFD. As a reference, we calculated a typical equivalent $\gamma$ value of DDPM to be $\gamma\approx 0.00024$ (see Appx.~\ref{app:subsec:ddpm}).

\begin{table}[t]
    \centering
    \begin{tabular}{c|ccccc}
    \toprule
      $\gamma$ & 0.0 & 0.0001 & 0.001 & 0.01 & 1.0 \\ 
      \midrule 
      3D-IS ($\uparrow$) & 2.24$\pm$0.12& \textbf{2.60$\pm$0.21} &2.47$\pm$0.39 &2.08$\pm$0.04 &1.77$\pm$0.13 \\
      \bottomrule
    \end{tabular}
    \caption{\textbf{Ablation on noise injection rate $\gamma$.} We ablate the impact of $\gamma$ on 3D generation diversity and quality. We generate samples with 16 random seeds.}
    \label{app:tab:sde-abl}
\end{table}
\section{Related Work}

\paragraph{Diffusion models} Diffusion models~\citep{sohl2015deep, sharma2018conceptual, ho2020denoising, song2020score, changpinyo2021conceptual, schuhmann2022laion} are generative models that are learned to reverse a diffusion process. A diffusion process gradually adds noise to a data distribution, and the diffusion model is trained to reverse such an iterative process based on the score function.
Denoise Diffusion Implicit Models (DDIM)~\citep{song2020denoising} proposed a deterministic sampling method to speed up the sampling. Meanwhile, it is proved that a diffusion process corresponds to a Probability Flow Ordinary Differential Equation (PF-ODE)~\citep{song2020score}, which yields the same marginal distributions as the forward diffusion process at any timestep.
Later works~\citep{salimans2022progressive,karras2022elucidating,lu2022dpm} demonstrate that DDIM can be viewed as the first-order discretization of the PF-ODE.

\paragraph{Score distillation sampling}

The score distillation sampling (SDS) paradigm for distilling 2D text-to-image diffusion models for 3D generation is proposed in DreamFusion~\citep{poole2022dreamfusion} and SJC~\citep{wang2023score}. During the distillation process, the learnable 3D representation with differentiable rendering is optimized by the gradient to make the rendered view match the given text.
Many recent works follow the SDS paradigm and studied for various aspects, including timestep annealing~\citep{huang2023dreamtime,wang2024prolificdreamer,zhu2023hifa}, coarse-to-fine training~\citep{lin2023magic3d, wang2024prolificdreamer, chen2023fantasia3d}, analyzing the components~\citep{katzir2023noise}, formulation refinement~\citep{zhu2023hifa, wang2024prolificdreamer, liang2023luciddreamer, tang2023stable, wang2023taming, yu2023text, armandpour2023re, wu2024consistent3d, yan2024fsd}, geometry-texture disentanglement~\citep{chen2023fantasia3d, ma2023geodream, wang2024prolificdreamer}, addressing multi-face Janus problem replacing the text-to-image diffusion with novel view synthesis diffusion~\citep{liu2023zero, long2023wonder3d, liu2023syncdreamer, weng2023consistent123, ye2023consistent, wang2023imagedream} or multi-view diffusion~\citep{shi2023mvdream}.

\paragraph{Reconstruction models} 

Another prevailing paradigm for 3D generation is to reconstruct the 3D shape given an input image. A typical pipeline is to first generate sparse-view images and then reconstruct the 3D shapes using reconstruction methods~\citep{wu2024unique3d, li2024era3d} or models~\citep{hong2023lrm, liu2023one2345, wang2024crm, tang2024lgm}. By directly training on relatively large scale 3D dataset like Objaverse~\citep{deitke2023objaverse}, these methods are usually capable of generating plausible shapes with a fast speed, but the performance of these models are usually limited when facing out of domain input images.

\section{Conclusion}

In this paper, we proposed Consistent Flow Distillation. We begin by leveraging the gradient
of the diffusion ODE or SDE sampling process to guide the 3D generation.
From a sampling perspective, we identified that using consistent flow to guide the 3D generation is the key to this process.
We developed a \dddcn with correct correspondence on the object surface and used it to implement the consistent flow.
Our method can generate high-quality 3D representations by distilling 2D image diffusion models and shows improvement in quality and diversity compared with prior score distillation methods.

\paragraph{Limitations and broader impact.} Although CFD can generate 3D assets of high fidelity and diversity, similar to prior works SDS, ISM, and VSD, the generation can take one to a few hours, and when distilling a text-to-image diffusion model, due to the properties of the teacher models, the distilled 3D representation sometimes may have multi-face Janus problem and may not be good for complex prompt.
Besides, due to 3D representation flexibility and interference from other views, it is very hard to guarantee that the sampling process from a rendered view of the 3D object is exactly the same as sampling for 2D images given text in practice. While our 3D consistent noise can reduce the interference and achieve better results, the flow for 3D rendered views may not be exactly the same as 2D flows of the initial noise.
Also, like other generative models, it needs to pay attention to avoid generating fake and malicious content.

\bibliographystyle{iclr2025_conference}
\bibliography{iclr2025_conference}
\newpage

\appendix

\section*{\Large{Appendix}}
\section{Implementation Details}\label{app:sec:impl}
In this paper, we conduct experiments primarily on a single NVIDIA-GeForce-RTX-3090 or NVIDIA-L40 GPU (the latter only for soft shading rendering). In the quantitative experiments, we adopt similar pipelines (including the choice of 3D representation, training steps, shape initialization, teacher diffusion model, etc.) across methods. We apply timestep annealing for all methods and use the same negative prompts in the quantitative experiments. The main differences between methods lie in the loss functions used.

We use CFG~\citep{ho2022classifier} scale of 75 for CFD in quantitative experiments. In practice, We found CFD works the best with CFG scale of 50-75. We apply the same fixed negative prompts~\citep{shi2023mvdream, katzir2023noise, mcallister2024rethinking} for different text prompts.

For simple prompts, we directly use CFD to distill Stable Diffusion v2.1 (Fig.~\ref{fig:compare-all}, \ref{app:fig:sd-stage1} and \ref{fig:compare-exp-sd}).

For mesh generation, we first use CFD to generate coarse shapes with MVDream~\citep{shi2023mvdream}. Then we use CFD and follow the geometry and mesh refinement stages in VSD~\citep{wang2024prolificdreamer} with Stable Diffusion  v2.1 to generate the mesh results in Fig. \ref{fig:main-results}.

For complex prompts, we adopt a 2 stage pipeline (Fig.~\ref{fig:nerf-results}, \ref{fig:diverse-results}, \ref{app:fig:sample-full1}, \ref{app:fig:sample-full2}, \ref{app:fig:sample-full3}, \ref{app:fig:cmp-prolific}, \ref{app:fig:cmp-lucid} and \ref{fig:compare-exp-mv}). We first generate coarse shape by distilling MVDream to avoid multi-face problems. Then we distill Stable Diffusion v2.1 to refine the details and colors (stage 2). We randomly replace the rendered image with normal map with 0.2 probability to regularize the geometry in stage 2. The total training time is approximately 3 hours on A100 GPU.

\begin{figure}[th!]
\begin{minipage}{\textwidth}
  \centering
  \includegraphics[width=0.94\textwidth]{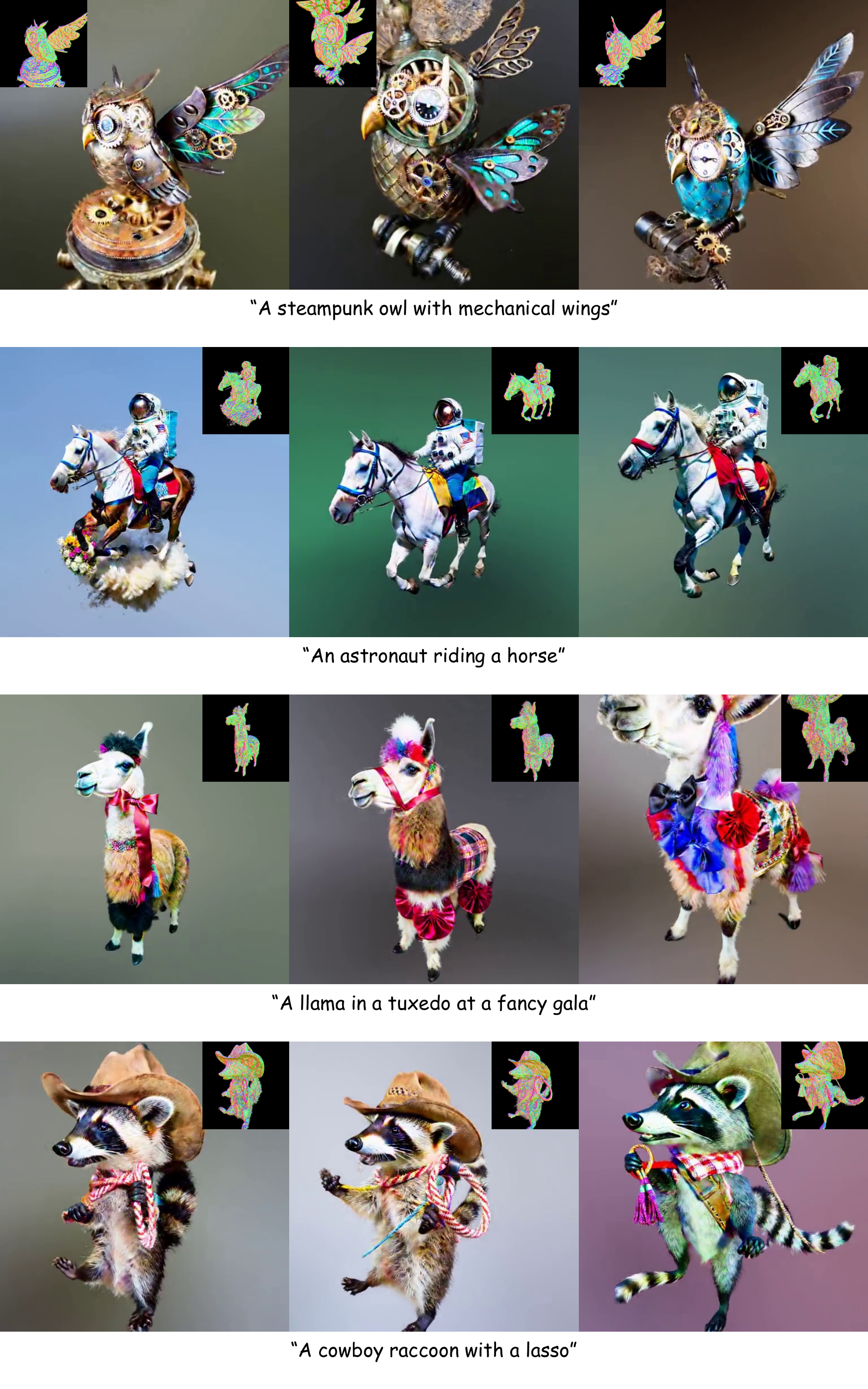}
  \caption{\textbf{Diverse NeRF results of CFD distilling MVDream then Stable Diffusion~\citep{rombach2022high} on complex prompts.}}
  \label{app:fig:sample-full1}
\end{minipage}

\end{figure}

\begin{figure}[th!]

\begin{minipage}{\textwidth}
  \centering
  \includegraphics[width=0.96\textwidth]{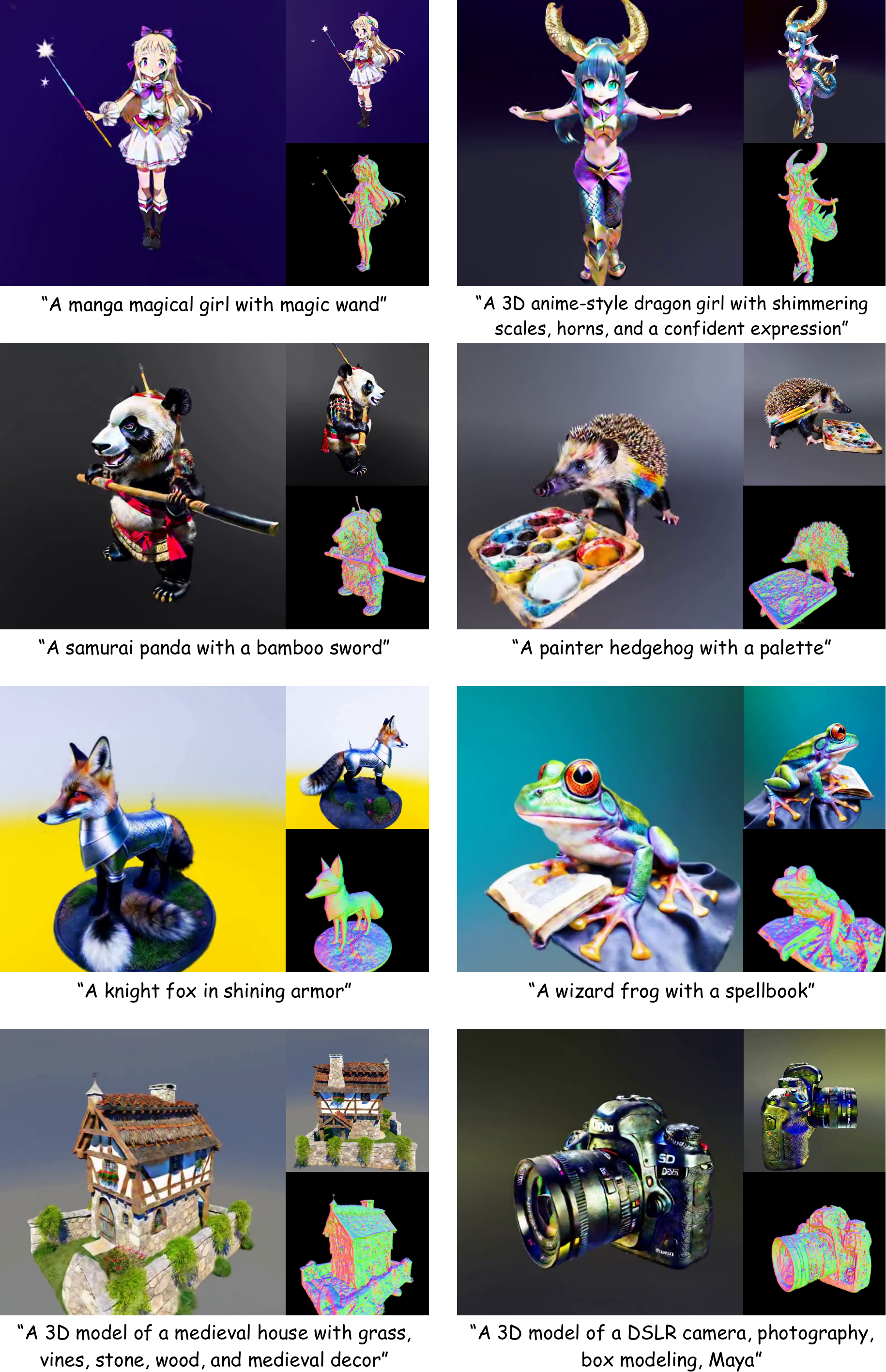}
  \caption{\textbf{NeRF results of CFD distilling MVDream then Stable Diffusion~\citep{rombach2022high} on complex prompts.}}
  \label{app:fig:sample-full2}
\end{minipage}

\end{figure}

\begin{figure}[th!]

\begin{minipage}{\textwidth}
  \centering
  \includegraphics[width=\textwidth]{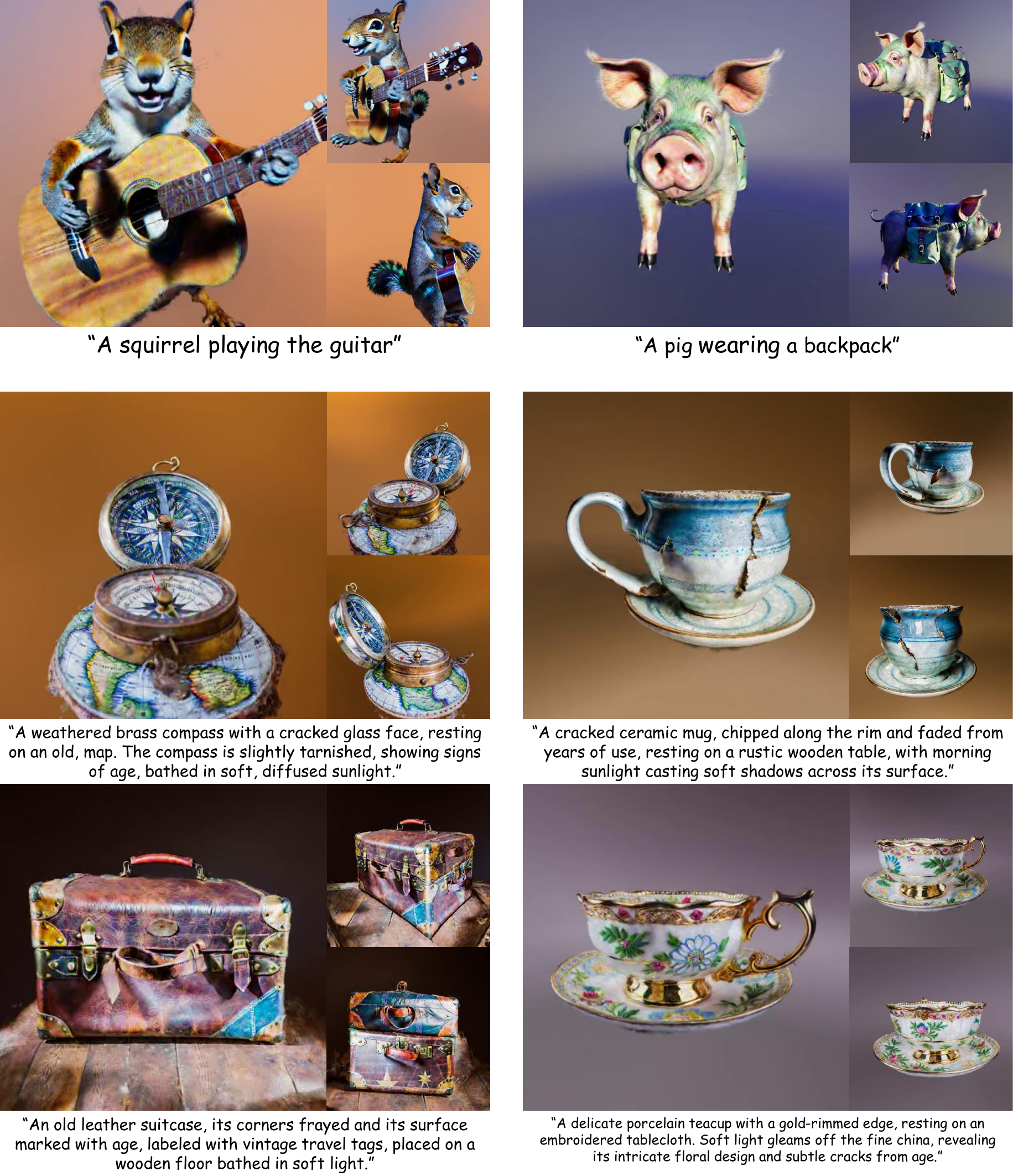}
  \caption{\textbf{NeRF results of CFD distilling MVDream then Stable Diffusion~\citep{rombach2022high} on complex prompts.} CFD successfully generated multiple objects and most align with long prompts.}
  \label{app:fig:sample-full3}
\end{minipage}
\end{figure}

\begin{figure}[th!]
\begin{minipage}{\textwidth}
  \centering
  \includegraphics[width=0.80\textwidth]{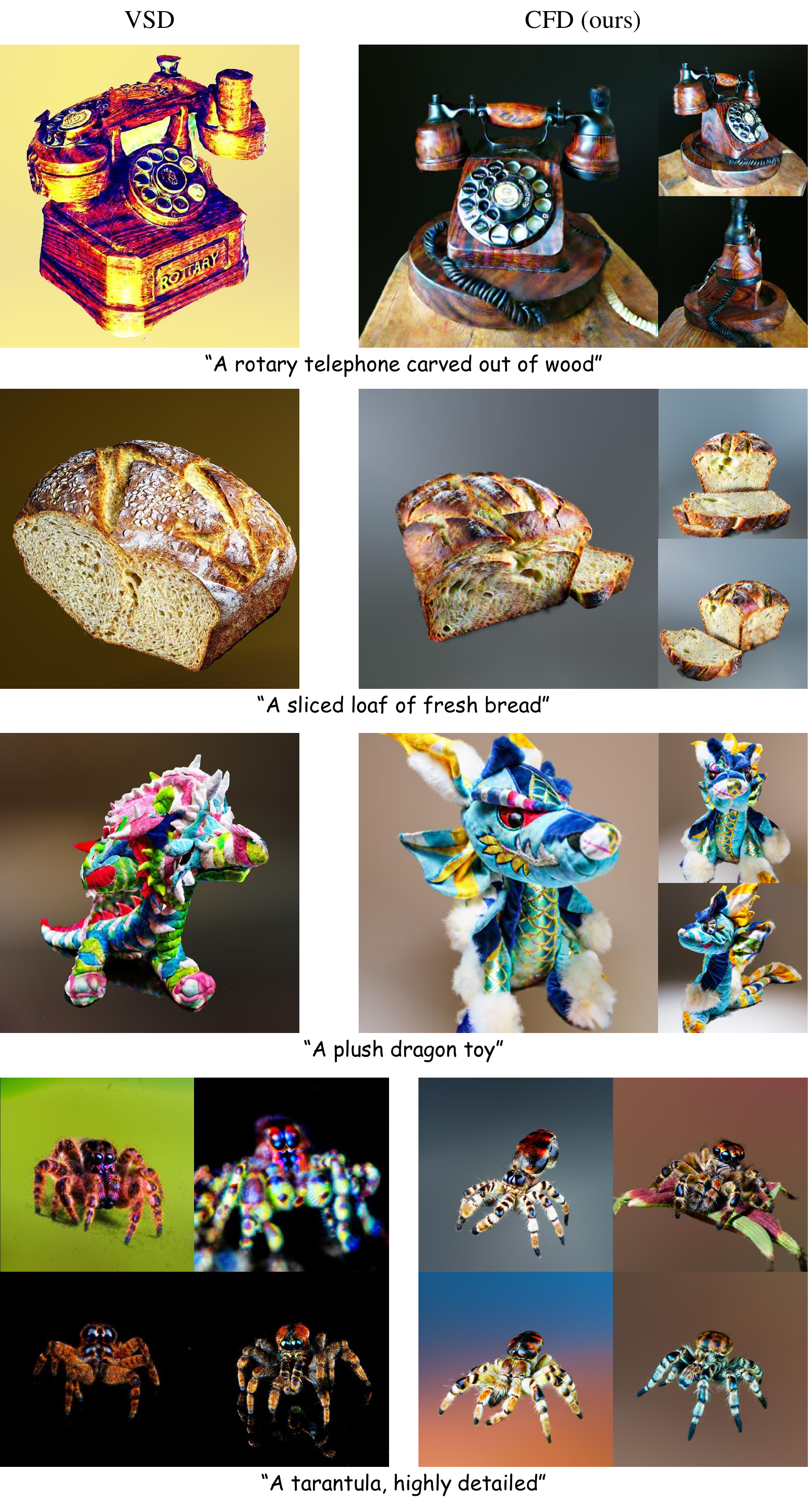}
  \caption{\textbf{Additional comparison with ProlificDreamer (VSD)~\citep{wang2024prolificdreamer}.} We show generation results of different methods with different seeds in the last row.}
  \label{app:fig:cmp-prolific}
\end{minipage}

\end{figure}

\begin{figure}[th!]

\begin{minipage}{\textwidth}
  \centering
  \includegraphics[width=0.80\textwidth]{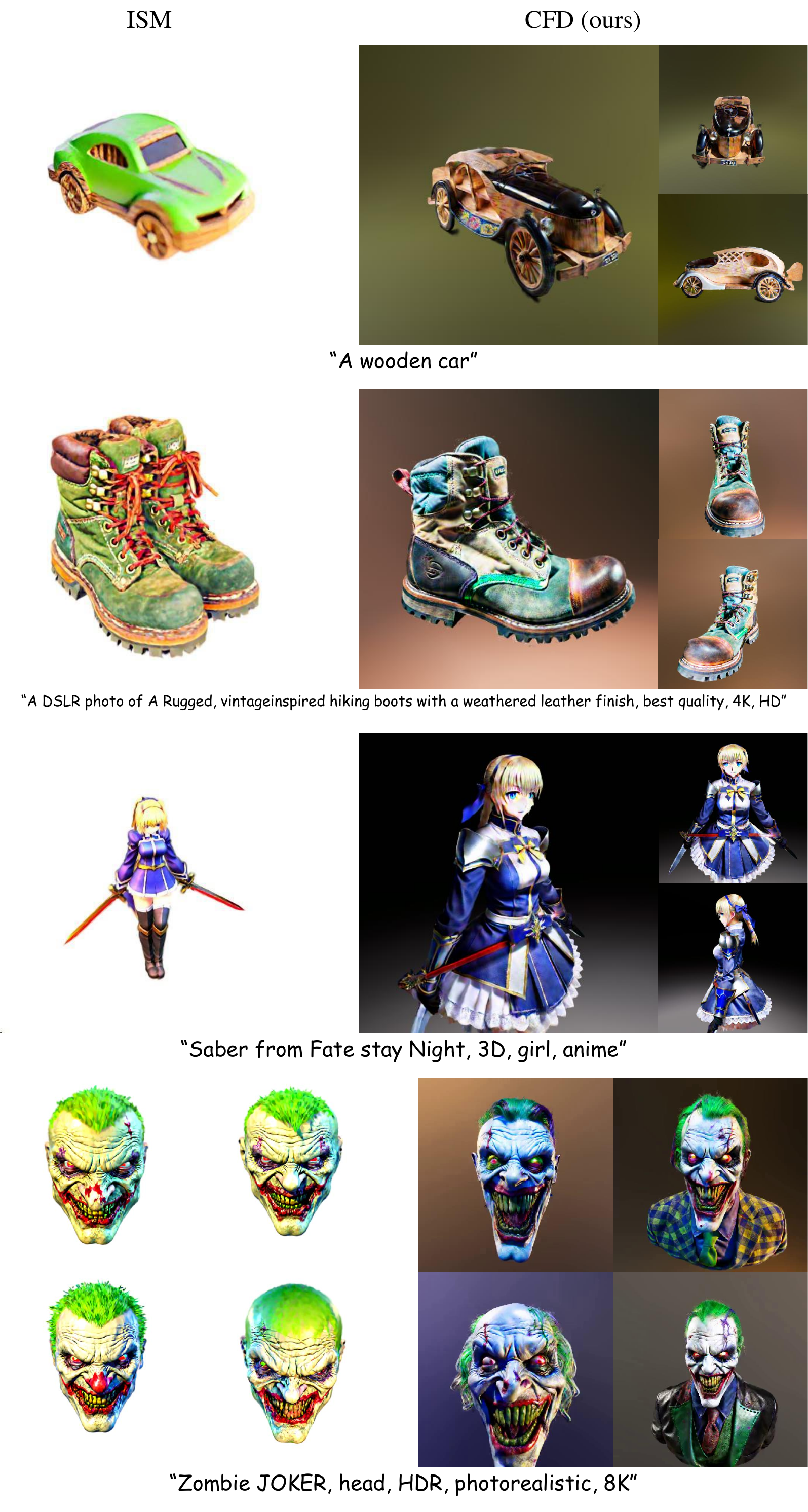}
  \caption{\textbf{Additional comparison with LucidDreamer (ISM)~\citep{liang2023luciddreamer}.} We show generation results of different methods with different seeds in the last row.}
  \label{app:fig:cmp-lucid}
\end{minipage}

\end{figure}

\begin{figure}[th!]

\begin{minipage}{\textwidth}
  \centering
  \includegraphics[width=0.8\textwidth]{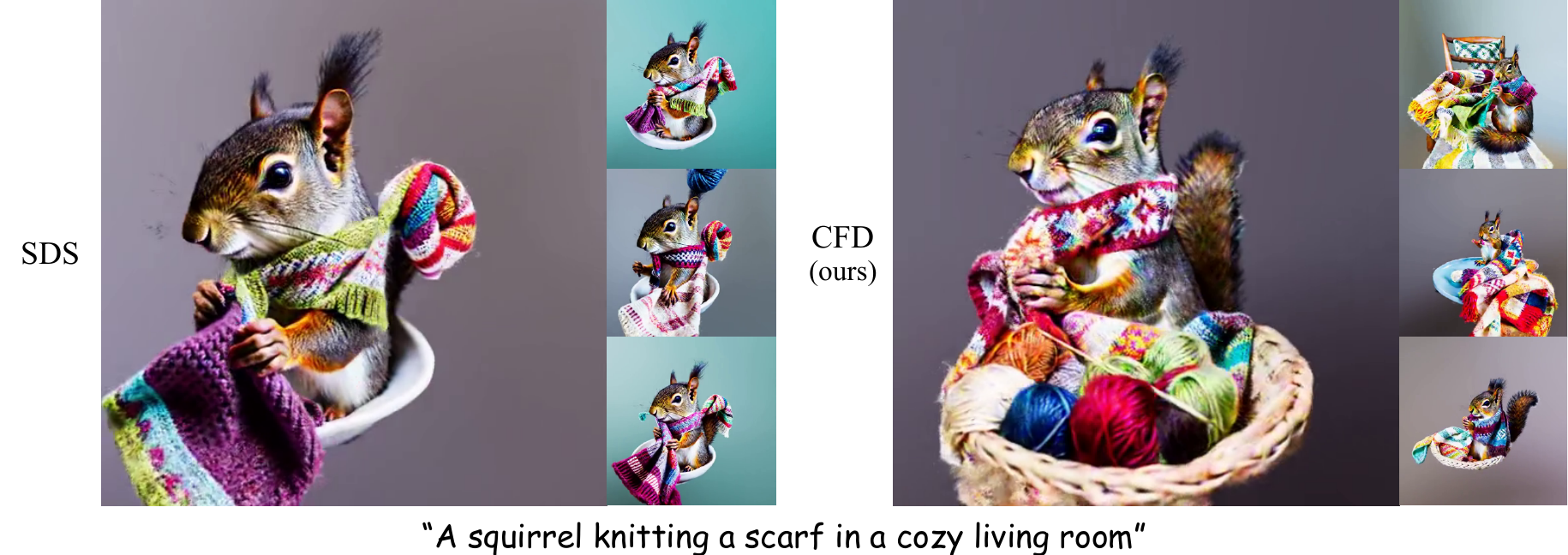}
  \includegraphics[width=0.8\textwidth]{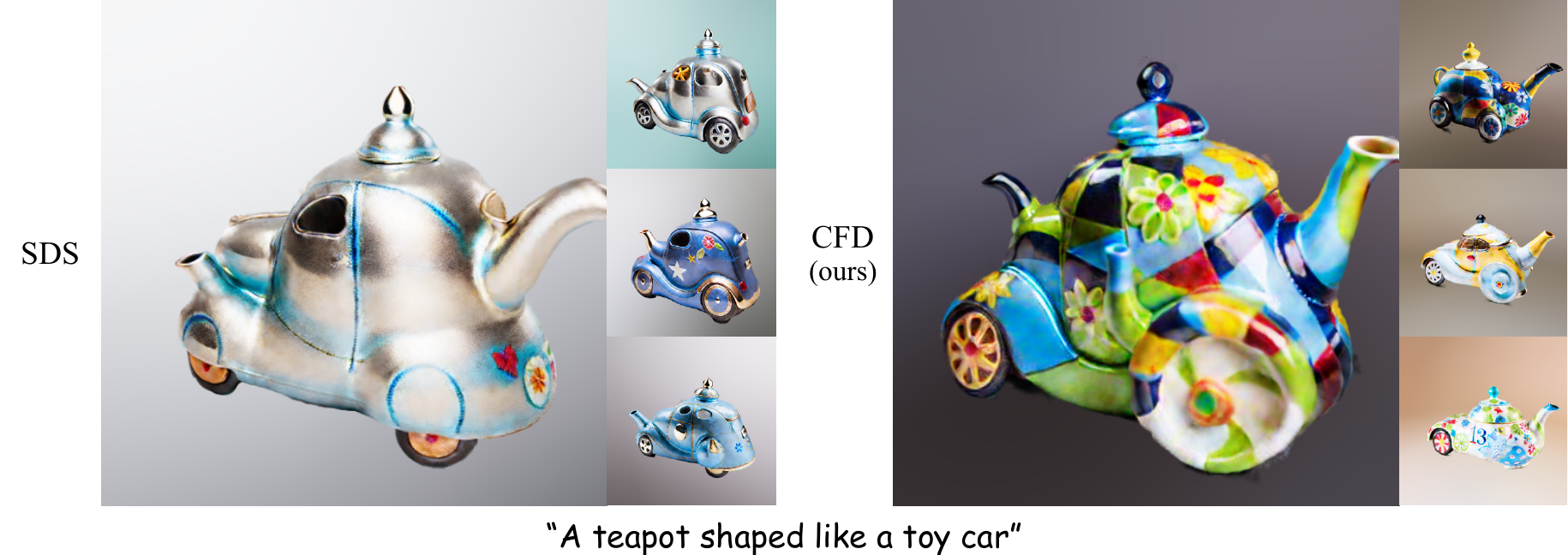}
  \includegraphics[width=0.8\textwidth]{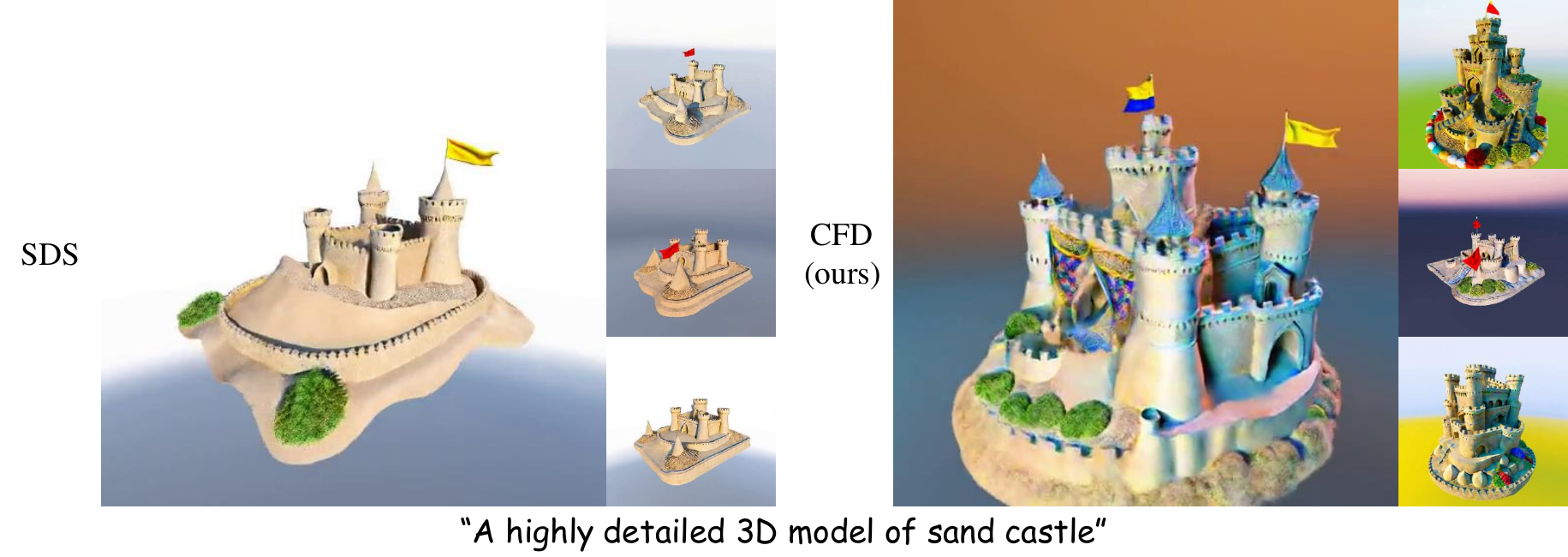}
  \caption{\textbf{Comparison with SDS.} We distill  MVDream~\citep{shi2023mvdream} and Stable Diffusion~\citep{rombach2022high} in this experiment. We first generate coarse shape by distilling MVDream using SDS and CFD, then distill Stable Diffusion to refined the color with SDS and CFD, respectively. In this figure, the only difference between two methods is the noise function used by SDS and CFD. We use 4 different seeds for each methods in this figure. SDS trends to generate oversmoothed textures and identical simple shapes. CFD outperforms SDS with better diversity and fidelity.}
  \label{fig:compare-exp-mv}
\end{minipage}

\end{figure}

\begin{figure}[th!]

\begin{minipage}{\textwidth}
  \centering
  \includegraphics[width=0.9\textwidth]{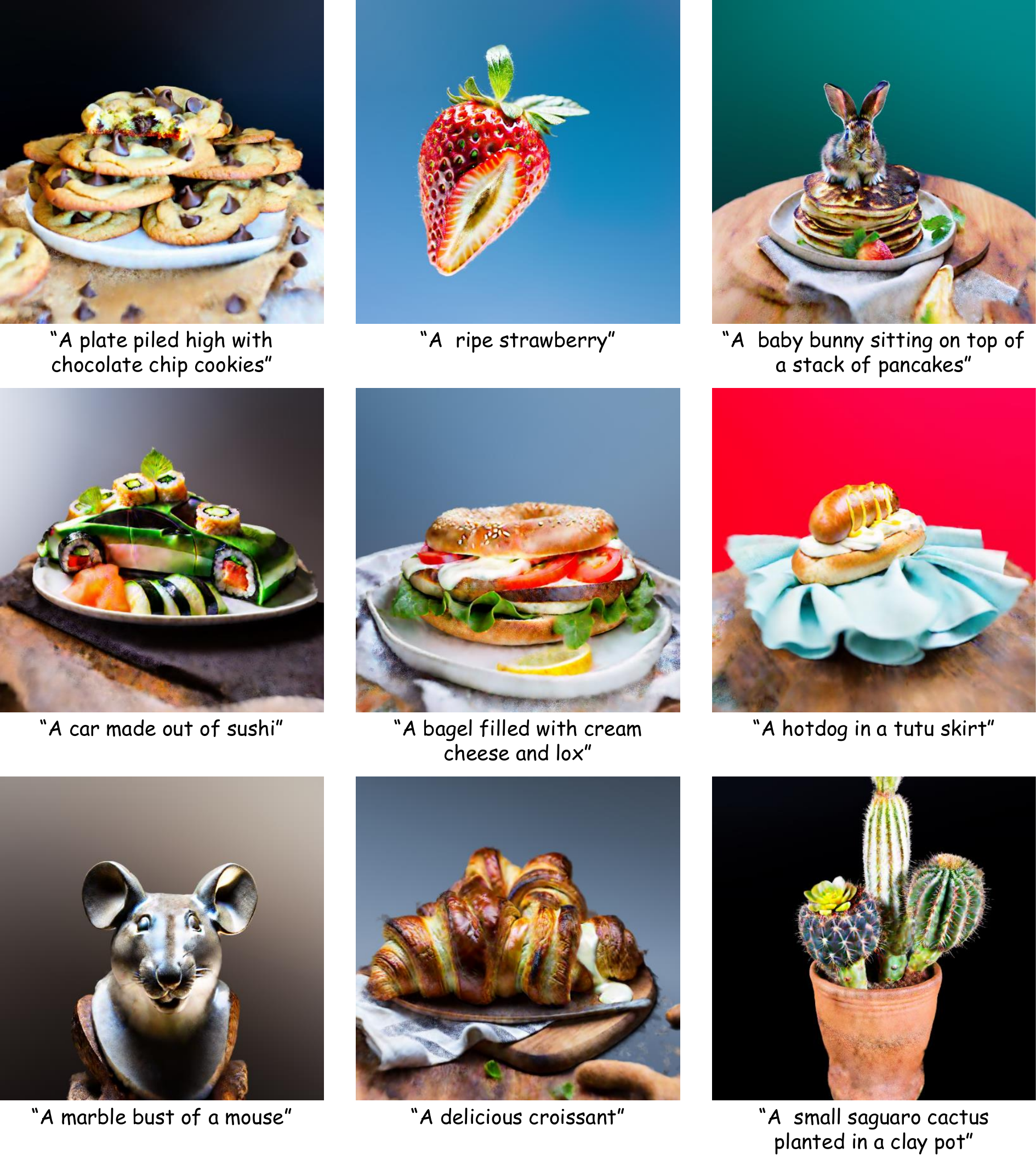}
  \caption{\textbf{NeRF results of CFD distilling Stable Diffusion~\citep{rombach2022high}.}}
  \label{app:fig:sd-stage1}
\end{minipage}
\end{figure}

\begin{figure}[t]
    \centering
  \centering
  \includegraphics[width=0.95\textwidth]{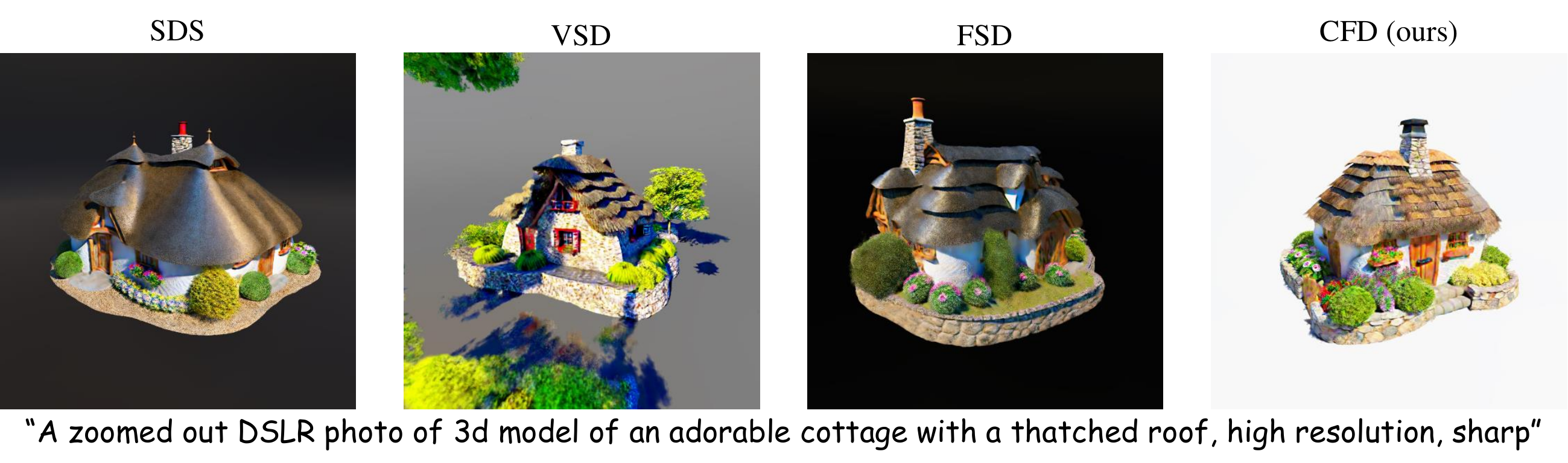}
  \includegraphics[width=0.95\textwidth]{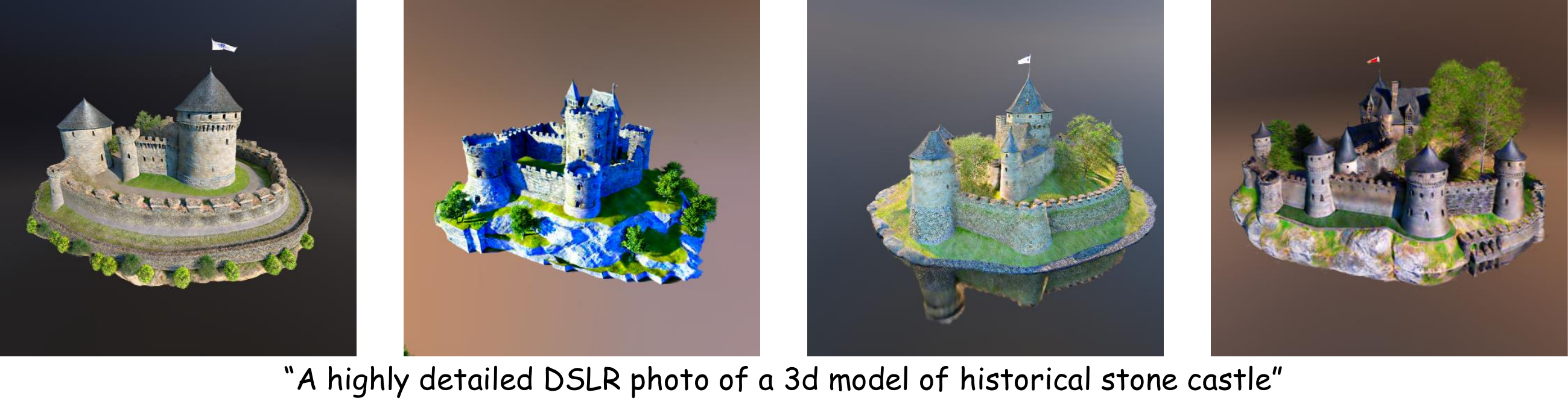}
  \includegraphics[width=0.95\textwidth]{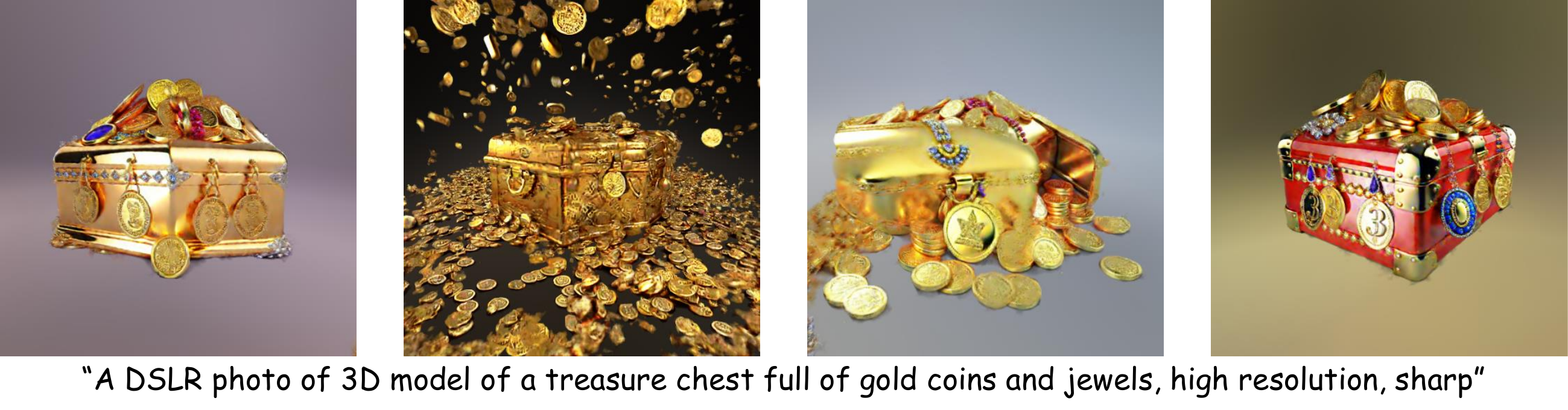}
  \caption{\textbf{Comparison with SDS, VSD and FSD.} We distill Stable Diffusion~\citep{rombach2022high} with different score distillation methods in this experiment. CFD outperforms SDS, VSD and FSD with better visual quality, geometry and has richer details.}
  \label{fig:compare-exp-sd}
\end{figure}

\section{Experiment Details}\label{app:sec:exp}
\paragraph{3D-FID} We compute the FID score between the rendered images for the generated 3D samples and the images generated by the teacher diffusion models following the evaluation setting of VSD~\citep{wang2024prolificdreamer}. 
For the experiments with 10 prompts in Tab.~\ref{tab:sd-fid-is}, we sampled 5,000 images for each prompt from Stable Diffusion, creating a real image set with a total of 50,000 images. We generated 3D objects using different score distillation methods, with 10 different seeds per prompt for each method. We rendered 60 views for each 3D object, resulting in a fake image set of 6,000 images. We use FID implementation from torchmetrics package with feature=2048.

\paragraph{3D-IS} We compute the Inception Score (IS) for the front-view images to measure the quality and diversity. We set split=2 to compute the standard variance of the IS metric. Due to limited compute budget, we use 16 random seeds for each parameter setting of $\gamma$ and then use the rendered front view to compute IS metric. The IS implementation used in our experiments is from the torchmetrics package.

\paragraph{3D-CLIP} We compute the CLIP cosine similarity between the rendered images of the 3D samples and the corresponding text prompt. For one sample, we render 120 views and take the maximum CLIP score. Then we average the CLIP score across different seeds and prompts (and CLIP models). We use CLIP socre implementation from torchmetrics package.

\paragraph{Aesthetic evaluation} Following Diffusion-DPO~\citep{wallace2024diffusion}, we conduct an automated win rate comparison under reward models in Tab.~\ref{tab:mv-vs-score}. The performance of our CFD method is evaluated against baseline models using Aesthetics Scores~\citep{schuhmann2022laionaesthetics} and PickScores~\citep{kirstain2023pick}. We calculate the scores on rendered images generated from 50 samples, each corresponding to a randomly selected prompt.

\begin{table}[t]

  \begin{minipage}{\linewidth}
    \centering
    \begin{tabular}{ccccc}
        \toprule
        &\multicolumn{4}{c}{3D-CLIP $\uparrow$}                   \\
        \cmidrule(r){2-5}
      & B16 & B32 & L14 & L14-336 \\ 
      \midrule
      SDS & 36.30 & 35.99 &31.82 &32.42\\ 
      VSD &36.58 &36.27 &31.97 &32.67 \\
      CFD (ours) &\textbf{36.79}  &\textbf{36.32}  &\textbf{32.44}  &\textbf{33.10} \\ 
      \bottomrule
    \end{tabular}
    \caption{\textbf{Comparison with baselines on prompt alignment.} We use 1 random seed for each of the 128 prompts. B16, B32, L14, L14-336 denote different versions of CLIP backbones. We observe that CFD is competitive or outperform SDS and VSD on prompt alignment.}
    \label{tab:mv-clip}
    \end{minipage}
\end{table}

\section{Additional Qualitative Comparison}\label{app:sec:addition}
We present more comparison between baseline methods and CFD in Fig.~\ref{app:fig:cmp-prolific}, Fig.~\ref{app:fig:cmp-lucid}, and Fig.~\ref{fig:compare-exp-mv}. We present additional generation results of CFD in Fig.~\ref{app:fig:sample-full1}, Fig.~\ref{app:fig:sample-full2}, Fig.~\ref{app:fig:sample-full3}, and Fig.~\ref{app:fig:sd-stage1}.

\section{Algorithms}
\label{app:sec:algo}
We provide pseudo algorithms for CFD in Algorithm \ref{algo:cfd}. Algorithm \ref{algo:3dcn} presents how to compute the \dddcn $\epsf(\theta, \bd{c})$.

\begin{figure}[t]

\begin{minipage}{\textwidth}
\begin{algorithm}[H]
\caption{CFD}\label{algo:cfd}
\begin{algorithmic}[1]

\STATE \textbf{Input:} 3D representation parameter $\theta$, prompt $y$, pretrained diffusion model $\bd{\epsilon}_\phi(\xt, t, y)$, render $\gc$, annealing time-schedule $t(\tau)$, learning rate $lr$.
\STATE \textbf{Output:} 3D representation parameter $\theta$.

\FOR{$\tau$ from $0$ to $\tau_{end}$}
    \STATE Sample camera view $\bd{c}$
    \STATE Render image $\gc$, depth map $\bd{Depth}(\bd{c})$, and opacity map $\bd{Opacity}(\bd{c})$
    \STATE Get diffusion timestep $t(\tau)$
    \STATE Compute 3D Consistent Noise $\epsf(\theta, \bd{c})$ \COMMENT{Refer to Algorithm \ref{algo:3dcn}}
    \STATE $\xt \leftarrow \alpha_t \gc + \sigma_t \epsf(\theta, \bd{c})$
    \STATE $\theta \leftarrow \theta - lr \cdot (\bd{\epsilon}_\phi(\xt, t(\tau), y)-\epsf(\theta, \bd{c}))\Jgc$
\ENDFOR

\end{algorithmic}
\end{algorithm}
\end{minipage}

\begin{minipage}{\textwidth}
\begin{algorithm}[H]
\caption{Computing 3D Consistent Noise}\label{algo:3dcn}
\begin{algorithmic}[1]
\STATE \textbf{Initialization:} Noise background $\bd{\epsilon}_{bg}$, high resolution noise  $\bd{\epsilon}_{ref}$ in reference space $\refspace$, opacity threshold $o_{th}$, noise injection rate $\gamma$.
\STATE \textbf{Input:} Depth map $\bd{Depth}(\bd{c})$, opacity map $\bd{Opacity}(\bd{c})$.
\STATE \textbf{Output:} $\epsf(\theta, \bd{c})=\bd{\epsilon}_{out}$.
\STATE Triangulate the pixels to $\bd{p}$
\STATE Project those triangles to the surface $\bd{ctw}(\bd{p})$ in world space $\worldspace$ according to $\bd{Depth}(\bd{c})$
\STATE Warp the triangles from world space $\worldspace$ to reference space $\refspace$ as $\mapfn(\ctw(\bd{p}))$
\STATE Rasterize and aggregate the noise values on $\bd{\epsilon}_{ref}$ corvered by the trangles
\STATE $\bd{\epsilon}_{out} \leftarrow \bd{\epsilon}_{bg}$
\STATE $\bd{\epsilon}_{out}[\bd{Opacity}(\bd{c})>o_{th}]_{\bd{p}} \leftarrow \frac{1}{\sqrt{n}} \sum_{(x,y)_i \text{ covered by the rasterized triangle }\mapfn(\ctw(\bd{p}))}^n \bd{\epsilon}_{ref}[x,y]$
\IF{$\gamma > 0$}
\STATE $\bd{\epsilon}_{bg} \leftarrow \sqrt{1-\gamma}\bd{\epsilon}_{bg} + \sqrt{\gamma} 
\cdot\text{randn\_like}(\bd{\epsilon}_{bg})$ \COMMENT{SDE noise injection}
\STATE $\bd{\epsilon}_{ref} \leftarrow \sqrt{1-\gamma}\bd{\epsilon}_{ref} + \sqrt{\gamma} 
\cdot\text{randn\_like}(\bd{\epsilon}_{ref})$
\ENDIF
\STATE Return $\bd{\epsilon}_{out}$
\end{algorithmic}
\end{algorithm}
\end{minipage}

\end{figure}
\paragraph{Choices of warping function $\mapfn$ and reference space $\refspace$}
Generally speaking, correct correspondence of noise map between different camera views can be achieved with any choice of continuous warping function $\mapfn$ and reference space $\refspace$.
In this work, we choose $\refspace$ to be a 2D square space $\refspace=[-1,1]^2$ to utilize existing fast rasterization algorithms, so that Algorithm \ref{algo:3dcn} can be efficiently computed. We design a warping function $\mapfn$ to map points in 3D world space $\worldspace$ to 2D reference space $\refspace$. Specifically, to compute the warping $\mapfn$ we first convert the points at $(x_p,y_p,z_p)$ to spherical coordinates $(r_p, \theta_p, \phi_p)$. 
For simplicity, we only present the case when $\phi_p \in [0, \frac{\pi}{2})$. The point is then mapped to $(x_r,y_r) \in \refspace$, where
\begin{align}
    \begin{cases}
        x_r = \sqrt{1-\cos \theta_p},\\
        y_r = \sqrt{1-\cos \theta_p} \cdot ( 2 \cdot \frac{\phi_p}{\frac{\pi}{2}} - 1).
    \end{cases}
    \label{eq:mapfn}
\end{align}
Under this mapping function, one can verify that $\dd x_r \dd y_r = |\frac{\partial (x_r, y_r)}{\partial (\theta_p, \phi_p)}|\dd \theta_p \dd \phi_p = \frac{2}{\pi} \cdot \sin \theta_p \dd \theta_p \dd \phi_p$. So points uniformly scattered on the sphere in 3D space $\worldspace$ will remain uniform after being mapped to the reference 2D space $\refspace$. This design helps to improve the fairness of Algorithm \ref{algo:3dcn} so that we can use a lower resolution reference space while keeping most of the warped triangles covering enough area in the reference space $\refspace$. 
Notably, two different triangles could overlap with the warping defined by Eq. \ref{eq:mapfn}, resulting in correlations across the pixels of the computed noise function $\epsf(\theta, \bd{c})$ in the same camera view. This overlap occurs only when the surface of the 3D object intersects the radius of a sphere centered at the origin of the $\worldspace$ more than once. However, we do not observe the destructive effects seen in other interpolation methods that can lead to correlation between pixels (as in Fig. \ref{fig:ablation-noise} (b)) in our experiments, and we believe it is unnecessary to find a warping function that avoids such overlapping completely.

\paragraph{Reference space $\refspace$ resolution}
We use reference space with resolution of $2048 \times 2048$ in most of our experiments. This will only introduce $8.1\%$ computation overhead to our training (tested on RTX-3090 GPU). The teacher model Stable Diffusion represents the whole object with latent at 64 resolution and MVDream~\citep{shi2023mvdream} 32, so noise map with 2048 resolution is sufficient. We also observe the quality is similar with noise map resolutions from 512 to 2048.

\section{Additional Ablations}\label{app:sec:abla}
\subsection{Ablation on the Design Space}
We ablate our proposed improvement step by step in this section. Timestep annealing~\citep{wang2024prolificdreamer, zhu2023hifa, huang2023dreamtime} is helpful for forming finer details. Adding negative prompts~\citep{shi2023mvdream, katzir2023noise, mcallister2024rethinking} helps to improve generation styles. We also find that adding negative prompts is crucial when timestep $t(\tau)$ is small. Without negative prompts, the color of samples will become unnatural during the optimization at small timesteps. In this work, we apply negative prompts by directly replacing the unconditional prediction of the diffusion model with prediction conditioned on negative prompts. Finally, by changing the random sampled noise in SDS with our \dddcn, the generated samples can form much richer details and are more diverse. We visualize this ablation in Fig.~\ref{app:fig:full-ablation}.

We propose utilizing CFD to distill the multi-view diffusion model, MVDream, in Stage 1 as shape initialization for complex prompts. This decision is based on our observation that both baseline methods and our CFD can experience multi-face issues when solely distilling SDv2.1 (Fig.~\ref{subfig:stage-a} and Fig.~\ref{subfig:stage-b}). However, distilling only MVDream produces low-quality results  (Fig.~\ref{subfig:stage-c}). To address these issues, we adopt a two-stage pipeline in our complete method, where Stage 1 initializes the shape using MVDream, and Stage 2 refines it by distilling SDv2.1. This approach effectively mitigates the challenges identified above, as illustrated in Fig.~\ref{subfig:stage-d}.

\begin{figure}[t]
\centering
\subfigure[SDS (SDv2.1)\label{subfig:stage-a}]{\includegraphics[width=0.2\linewidth]{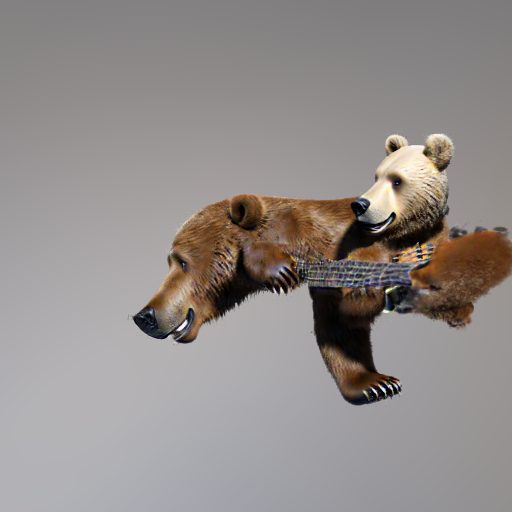}}
\subfigure[CFD (SDv2.1)\label{subfig:stage-b}]{\includegraphics[width=0.2\linewidth]{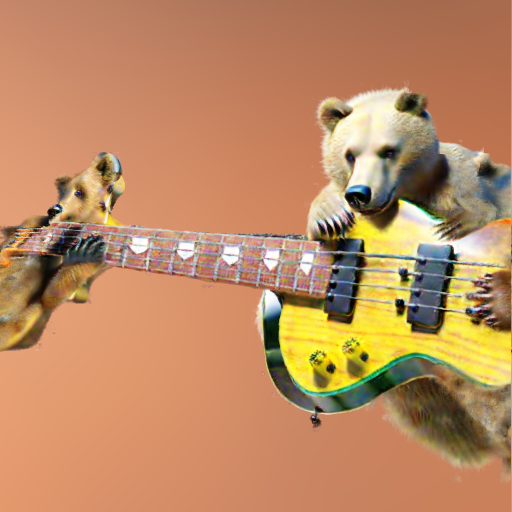}}
\subfigure[CFD (MVDream)\label{subfig:stage-c}]{\includegraphics[width=0.2\linewidth]{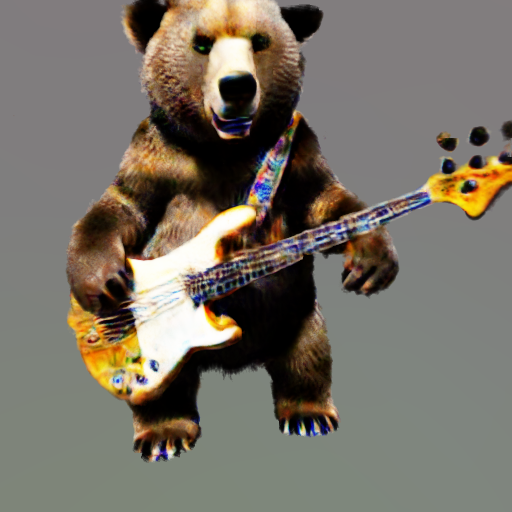}}
\subfigure[CFD (2 stage)\label{subfig:stage-d}]{\includegraphics[width=0.3\linewidth]{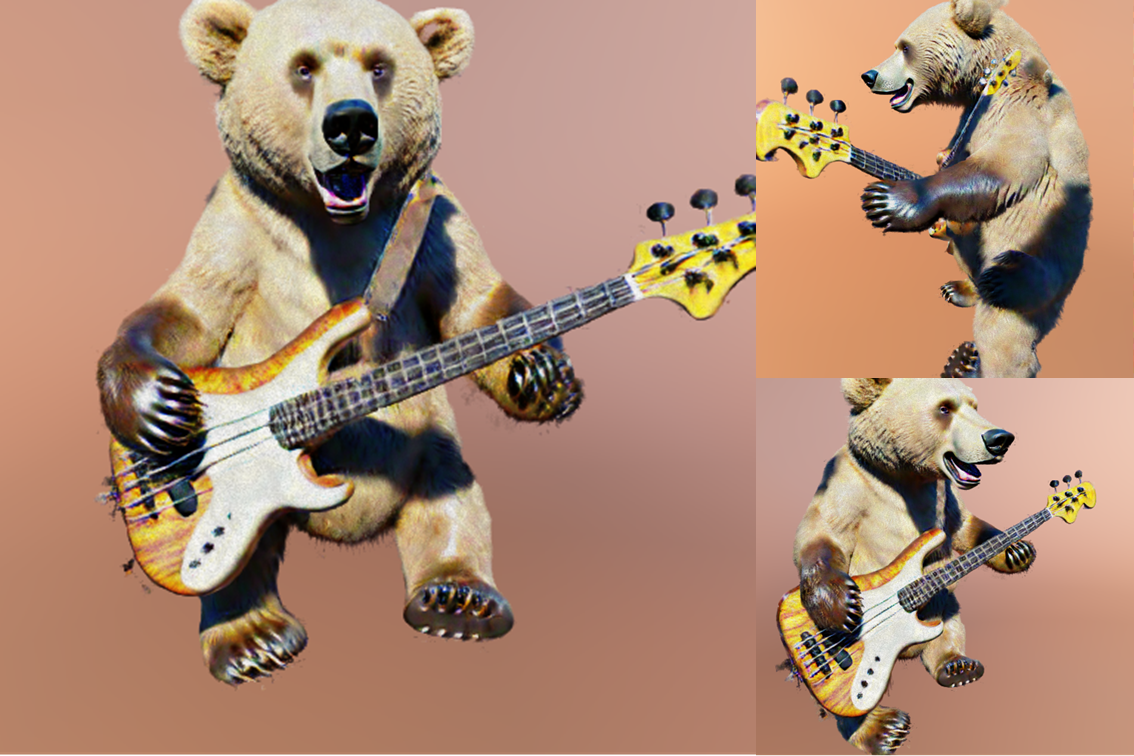}}

\caption{\textbf{Ablation on the pipeline stages.} Prompt: ``A bear playing an electric bass''.}
\label{app:fig:stage-ablation}
\end{figure}

\begin{figure}[t]
\centering
\subfigure[random timesteps]{\includegraphics[width=0.24\linewidth]{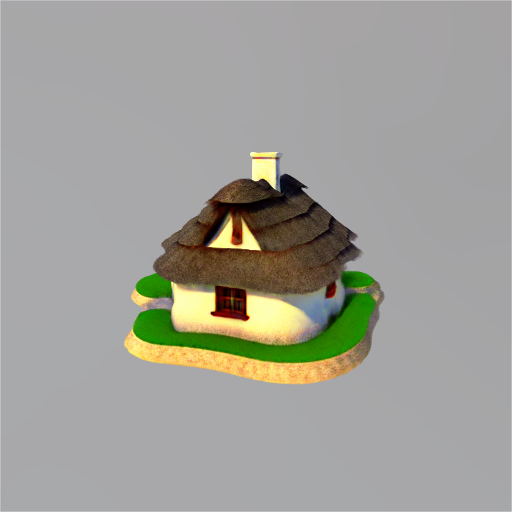}}
\subfigure[+ annealing timesteps]{\includegraphics[width=0.24\linewidth]{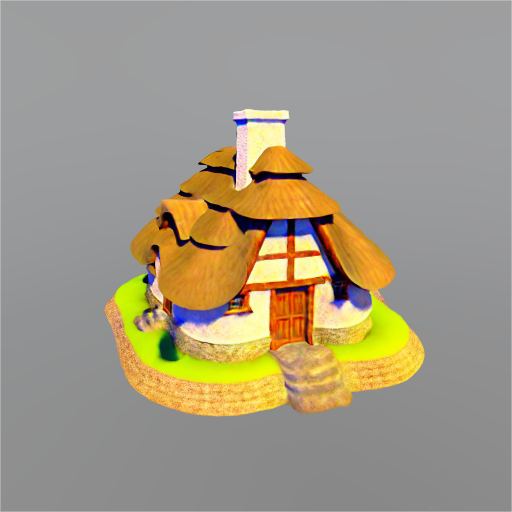}}
\subfigure[+ negative prompts]{\includegraphics[width=0.24\linewidth]{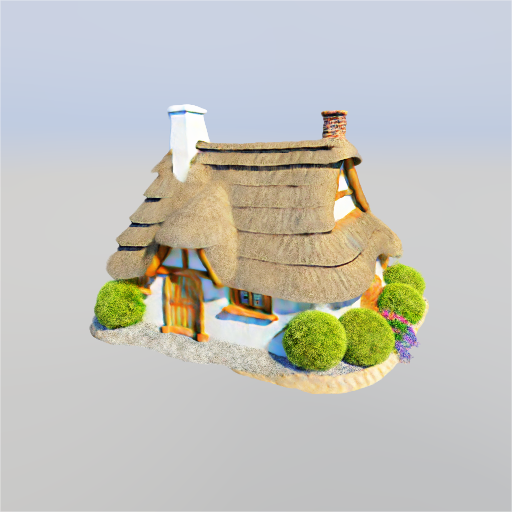}}
\subfigure[+ consistent noise]{\includegraphics[width=0.24\linewidth]{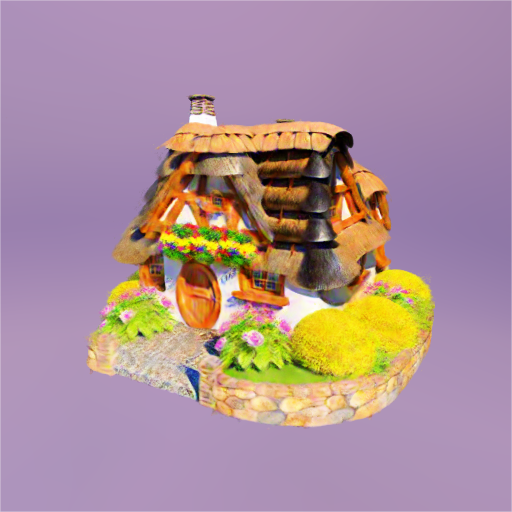}}

\caption{\textbf{Ablation on the proposed improvements.}} 
\label{app:fig:full-ablation}
\end{figure}

\subsection{Compare the pipeline of different methods}\label{app:subsec:comp}
We list the differences between the pipelines of different baseline methods in Tab.~\ref{tab:compare_sds_fsd}.

\begin{table}[t]
    \centering
    \begin{tabular}{cccc}
      \toprule
      & \textbf{VSD} & \textbf{ISM} & \textbf{CFD (ours)}\\ 
      \midrule
      \textbf{Timestep schedule} & 
        \multicolumn{3}{c}{(sample $t \sim \mathcal{U}(t_{min}, t_{max})$)}  \\
      $t_{max}=t_{min}$ & False & False & True \\
      $t_{max}$ & Abrupt decrease & Linearly decrease \tiny(till $t_0$)& Linearly decrease \tiny(multi-stage)\\
      $t_{min}$ & Fixed & Fixed & Linearly decrease \tiny(multi-stage)\\
      \midrule
      \textbf{Noise} \\
      Noise type & Random & Inversion & Consistent\\
      \midrule
      \textbf{3D representation}\\
      Shape initialization & Stable Diffusion\tiny(+VSD) & point-e & MVDream\tiny(+CFD)\\
      Representation & NeRF$\rightarrow$Mesh & point cloud$\rightarrow$3DGS & NeRF($\rightarrow$Mesh)\\
      \midrule
      \textbf{Uncond prompt} \\
      LoRA network & True & False & False\\
      Negative prompt & False & True & True \\
      \bottomrule
    \end{tabular}
    \caption{Comparison between pipelines of VSD, ISM and CFD.}
    \label{tab:compair-methods}
\end{table}

\subsection{Comparison on noise methods}\label{app:subsec:comp-fsd}
We list the differences between the noising methods of different baseline methods in Tab.~\ref{tab:compair-methods}. 
Concurrent work 
FSD~\citep{yan2024fsd} also employs a deterministic, view-dependent noising function and can therefore be considered a special case of our CFD with $\gamma=0$. The noise of FSD is aligned on a shpere independent of the 3D object surface.
However, this noise design can still lead to oversmoothed textures, and the misalignment of noise with the 3D object surface can sometimes result in suboptimal geometry (see Fig.~\ref{fig:compare-exp-sd}). 
The noise design of FSD is inferior to ours when the 3D object shape is nearly formed. Gradient consistency is essential for accurately constructing geometry in differentiable 3D representations like NeRF. Aligning noise in 3D space independently of the object surface can lead to deviations from the original geometry, even when a relatively good shape is formed, as a highly consistent region may be located away from the surface. In contrast, our noise design, which aligns with the object surface, avoids such issues.

Notably, the object surface can slowly change during the generation process, so the noise for the same view in CFD is not strictly fixed even when $\gamma=0$ in Eq.~\ref{eq:fresh-noise}. 
\begin{table}[t]
    \centering
    \begin{tabular}{cccc}
      \toprule
      & \textbf{SDS} & \textbf{FSD} & \textbf{CFD (ours, when $\gamma=0$)}\\ 
      \midrule
      \textbf{Timestep schedule} & Random & Annealing & Annealing \\ 
      \textbf{Same view noise} & Random & Fixed & Mostly fixed (surface-dependent) \\
      \textbf{Different views noise} & Independent & Aligned on sphere & Aligned on object surface \\
      \bottomrule
    \end{tabular}
    \caption{Comparision between SDS, FSD, and CFD.}
    \label{tab:compare_sds_fsd}
\end{table}

\section{Gradient Variance}
\label{app:sec:variance}
\begin{table}[t]
    \centering
    \begin{tabular}{ccccc}
    \toprule
       & VSD & SDS & FSD & CFD (ours)\\ 
      \midrule
      $\sigma$ ($\downarrow$)
      & 5.165$\pm$ 0.458 & 4.670$\pm$0.066 & 4.580$\pm$0.081 & \textbf{4.521$\pm$0.090} \\
      \bottomrule
    \end{tabular}
    \vspace{5pt}
    \caption{\textbf{Scaled Gradient Variance.} Our CFD has the lowest gradient variance.}
    \label{app:tab:grad-var}
\end{table}

We compare the gradient variance of different methods during training. We compute the scaled gradient variance by taking Exponential Moving Average parameters $\hat{v}_t$, $\hat{m}_t$ from Adam optimizer for convenience. We report the scaled gradient variance $\sigma$ on the parameters of nerf hash encoding with 10 seeds for each of the noising methods. $\sigma$ was calculated according to (where $g_t$ is the gradient):

\begin{equation}
   \begin{cases}
   \hat{m}_t \approx \mathbb{E}[g_t], \\
   \hat{v}_t \approx \mathbb{E}[g_t^2], \\
   \sigma = \sqrt{\frac{\text{sum}(\hat{v}_t - \hat{m}_t^2)}{\text{sum}(\hat{v}_t)}} \approx \sqrt{\frac{\text{sum}(\text{Var}(g_t))}{\text{sum}(\hat{v}_t)}}.
   \end{cases}
\end{equation}

We report the gradient variance in training for VSD~\citep{wang2024prolificdreamer}, SDS~\citep{poole2022dreamfusion, wang2023score}, FSD~\citep{yan2024fsd} and our methods in Tab.~\ref{app:tab:grad-var}.

\section{Clean Flow SDE}\label{app:sec:clean-flow-sde}
\subsection{Background}
Song~et~al.~\citep{song2020score} presented a SDE that has the same marginal distribution $p_t(\xt)$ as the forward diffusion process (Eq.~\ref{eq:forward-diffusion}). EDM~\citep{karras2022elucidating} presented a more general form of this SDE, and the SDE corresponds to forward process defined in Eq.~\ref{eq:forward-diffusion} takes the following form:
\begin{align}
    \dd \big( \frac{\xtpm}{\alpha_t} \big) 
    &= - \sigma_t \nabla_{\xtpm} \log p_t(\xtpm) \dd \big( \frac{\sigma_t}{\alpha_t} \big) \pm \beta_t \big( \frac{\sigma_t}{\alpha_t} \big) \sigma_t \nabla_{\xtpm} \log p_t(\xtpm) \dd t + \sqrt{2 \beta_t} \big( \frac{\sigma_t}{\alpha_t} \big) \dd \bd{w}_t\\
    &= \big( \dd \big( \frac{\sigma_t}{\alpha_t} \big) \mp \frac{\sigma_t}{\alpha_t} \beta_t \dd t \big) \cdot \underbrace{\boldsymbol{\epsilon}_\phi(\xtpm, t, y)}_{-\sigma_t \nabla_{\bd{x}} \log p_t (\bd{x})} + \sqrt{2 \beta_t} \big( \frac{\sigma_t}{\alpha_t} \big) \dd \bd{w}_t,\label{app:eq:diffusion-sde}
\end{align}
where $\dd \bd{w}_t$ is the standard Wiener process. If we set $\alpha_t=1$ for all $t \in [0,T]$, Eq.~\ref{app:eq:diffusion-sde} will become the same SDE in EDM~\citep{karras2022elucidating}. The initial condition for the forward process is $\xt[+] \sim p_{t_s} (\xt[+])$ at $t=t_s$ ($t_s$ is small enough but $t_s>0$ to avoid numerical issues), and for the reverse process, it is $\xt[-] \sim \NormalT$ at $t=T$ (Note that we also let $\alpha_T$ be a small number but $\alpha_T > 0$ to avoid numerical issues).

\subsection{Clean Flow SDE}
The clean flow SDE takes the following form:
\begin{align}
    \begin{cases}
        \dd \xopm = \big(\dd \big(\frac{\sigma_t}{\alpha_t}\big) \mp \frac{\sigma_t}{\alpha_t} \beta_t \dd t\big) \cdot \big(\boldsymbol{\epsilon}_\phi(\alpha_t \xopm + \sigma_t \epspm, t, y) - \epspm\big),\\
        \dd \epspm = \mp \epspm \beta_t \dd t + \sqrt{2 \beta_t} \dd \bd{w}_t,\label{app:eq:clean-flow-sde}
    \end{cases}
\end{align}
where $\dd \bd{w}_t$ is the standard Wiener process. For the forward process, the initial condition at $t=t_s$ is $\xo[+] \sim p_0 (\xt[+])$, $\epsf_{+} \sim \Normal$, and $\xo[+]$ and $\epsf_{+}$ are independent. For the reverse process, the initial condition at $t=T$ is $\xo[-] = \bd{0}$ and $\epsf_{-} \sim \Normal$.

\begin{pro}[Clean flow SDE is equivalent to diffusion SDE]
In Eq.~\ref{app:eq:clean-flow-sde}, if we define a new variable $\xtpm'$ according to
\begin{align}
    \xtpm' = \alpha_t \xopm + \sigma_t \epspm,\label{app:eq:recover}
\end{align}
then $\xtpm'$ and $\xtpm$ in Eq.~\ref{app:eq:diffusion-sde} have the same law (probability distribution) for all $t \in [t_s, T]$. i.e. Eq.~\ref{app:eq:clean-flow-sde} and Eq.~\ref{app:eq:diffusion-sde} are equivalent.
\end{pro}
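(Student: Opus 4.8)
The plan is to read Eq.~\ref{app:eq:recover} as an It\^o change of variables: I would show that the process $\xtpm'$ it defines satisfies \emph{exactly} the diffusion SDE in Eq.~\ref{app:eq:diffusion-sde} and starts from the same law, and then conclude equality of the two laws from uniqueness in law for that SDE. Since Eq.~\ref{app:eq:diffusion-sde} is written in the rescaled variable, the first step is to rewrite the target as $\frac{\xtpm'}{\alpha_t} = \xopm + \frac{\sigma_t}{\alpha_t}\epspm$ and differentiate. Because $\frac{\sigma_t}{\alpha_t}$ is a deterministic function of finite variation, the ordinary product rule applies with no extra quadratic-covariation correction, giving $\dd\big(\frac{\sigma_t}{\alpha_t}\epspm\big) = \epspm\,\dd\big(\frac{\sigma_t}{\alpha_t}\big) + \frac{\sigma_t}{\alpha_t}\,\dd\epspm$, so that $\dd\big(\frac{\xtpm'}{\alpha_t}\big) = \dd\xopm + \epspm\,\dd\big(\frac{\sigma_t}{\alpha_t}\big) + \frac{\sigma_t}{\alpha_t}\,\dd\epspm$.

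The heart of the argument is substituting the two lines of the clean flow system Eq.~\ref{app:eq:clean-flow-sde} into this expression while tracking the $\pm/\mp$ bookkeeping. Using $\alpha_t\xopm + \sigma_t\epspm = \xtpm'$ inside the network, the score term of the $\xopm$-drift produces exactly the drift $\big(\dd(\frac{\sigma_t}{\alpha_t}) \mp \frac{\sigma_t}{\alpha_t}\beta_t\dd t\big)\bd{\epsilon}_\phi(\xtpm', t, y)$, and the martingale part of $\dd\epspm$ produces the $\frac{\sigma_t}{\alpha_t}\sqrt{2\beta_t}\,\dd\bd{w}_t$ diffusion term of Eq.~\ref{app:eq:diffusion-sde}. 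The point I would stress is that every remaining term is linear in $\epspm$ and cancels: the $-\epspm$ piece of the $\xopm$-drift contributes $-\epspm\,\dd(\frac{\sigma_t}{\alpha_t}) \pm \frac{\sigma_t}{\alpha_t}\beta_t\epspm\,\dd t$, the product-rule piece contributes $+\epspm\,\dd(\frac{\sigma_t}{\alpha_t})$, and the drift of $\epspm$ contributes $\mp\frac{\sigma_t}{\alpha_t}\beta_t\epspm\,\dd t$; these sum to zero. This cancellation is precisely what the matched $\mp$ signs in the two lines of Eq.~\ref{app:eq:clean-flow-sde} are engineered to produce, and it works simultaneously for the forward ($+$) and reverse ($-$) cases. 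Hence $\frac{\xtpm'}{\alpha_t}$ obeys Eq.~\ref{app:eq:diffusion-sde}.

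Next I would check the data. For the reverse process at $t=T$, $\xo[-]=\bd{0}$ and $\epsf_-\sim\Normal$ give $\xt[-]' = \sigma_T\epsf_-\sim\NormalT$, matching the reverse initialization of Eq.~\ref{app:eq:diffusion-sde}; for the forward process at $t=t_s$, independence of $\xo[+]\sim p_0$ and $\epsf_+\sim\Normal$ together with the forward definition Eq.~\ref{eq:forward-diffusion} gives $\xt[+]' = \alpha_{t_s}\xo[+] + \sigma_{t_s}\epsf_+\sim p_{t_s}$, matching the forward initialization. With identical coefficients and identical initial law, I would invoke uniqueness in law for the diffusion SDE to conclude that $\xtpm'$ and $\xtpm$ agree in distribution for every $t\in[t_s,T]$.

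I expect the only genuine obstacle to be well-posedness rather than the algebra. The drift involves $\bd{\epsilon}_\phi$ (equivalently the score $\nabla_{\bd{x}}\log p_t$), which need not be globally Lipschitz, so strictly speaking one must either assume the regularity and growth conditions under which Eq.~\ref{app:eq:diffusion-sde} admits a unique-in-law solution, or work on $[t_s,T]$ with $\alpha_t,\sigma_t$ bounded away from their degenerate limits --- which is exactly why the statement is phrased on $[t_s,T]$ with $\alpha_T>0$. The It\^o step itself is benign, since the shared Wiener process is the only stochastic integrator and all time-dependent prefactors are deterministic, so no further It\^o correction terms appear.
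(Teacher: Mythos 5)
Your proposal is correct and follows essentially the same route as the paper's proof: the identical change of variables $\frac{\xtpm'}{\alpha_t} = \xopm + \frac{\sigma_t}{\alpha_t}\epspm$, the same product-rule expansion, the same cancellation of all terms linear in $\epspm$ via the matched $\mp$ signs, and the same verification of the forward and reverse initial laws. The only additions --- explicitly invoking uniqueness in law to pass from ``same coefficients and same initial law'' to ``same law for all $t$,'' and flagging the regularity of the score needed for well-posedness on $[t_s,T]$ --- are refinements the paper leaves implicit, not a different argument.
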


\begin{proof}[proof] We prove the equivalence by showing that the initial conditions and dynamics for $\xtpm'$ and $\xtpm$ are identical.

\textbf{Initial conditions.} For the forward process of Eq.~\ref{app:eq:clean-flow-sde} at $t=t_s$, $\xt[+]' = \alpha_{t_s} \xo[+] + \sigma_{t_s} \epsf_+$. Thus, $\xtpm' \sim p_{t_s}(\xtpm')$ according to the definition of a forward diffusion process (Eq.~\ref{eq:forward-diffusion}). For the reverse process of Eq.~\ref{app:eq:clean-flow-sde} at $t=T$, $\xt[-]' = \alpha_{T} \cdot \bd{0} + \sigma_{T} \epsf_- = \sigma_{T} \epsf_-$. So $\xt[-]' \sim \NormalT$.

\textbf{Dynamics.} The dynamic of $\xtpm'$ can be derived according to:
\begin{equation}
\begin{aligned}
    \dd \big( \frac{\xtpm'}{\alpha_t} \big) 
    =& \dd \big( \xopm + \frac{\sigma_t}{\alpha_t} \epspm \big) \\
    =& \dd \xopm + \dd \big(\frac{\sigma_t}{\alpha_t}\big) \epspm + \frac{\sigma_t}{\alpha_t} \dd \epspm \\
    =& \big(\dd \big(\frac{\sigma_t}{\alpha_t}\big) \mp \frac{\sigma_t}{\alpha_t} \beta_t \dd t\big) \cdot \big(\boldsymbol{\epsilon}_\phi(\alpha_t \xopm + \sigma_t \epspm, t, y) - \epspm\big) + \dd \big(\frac{\sigma_t}{\alpha_t}\big) \epspm + \frac{\sigma_t}{\alpha_t} \dd \epspm\\
    =& \big(\dd \big(\frac{\sigma_t}{\alpha_t}\big) \mp \frac{\sigma_t}{\alpha_t} \beta_t \dd t\big) \cdot \big(\boldsymbol{\epsilon}_\phi(\xtpm', t, y) - \epspm\big) + \dd \big(\frac{\sigma_t}{\alpha_t}\big) \epspm + \frac{\sigma_t}{\alpha_t} \dd \epspm\\
    =& \big(\dd \big(\frac{\sigma_t}{\alpha_t}\big) \mp \frac{\sigma_t}{\alpha_t} \beta_t \dd t\big) \cdot \boldsymbol{\epsilon}_\phi (\cdots) - \dd \big(\frac{\sigma_t}{\alpha_t}\big) \epspm \pm \frac{\sigma_t}{\alpha_t} \beta_t \epspm \dd t + \dd \big(\frac{\sigma_t}{\alpha_t}\big) \epspm + \frac{\sigma_t}{\alpha_t} \dd \epspm\\
    =& \big(\dd \big(\frac{\sigma_t}{\alpha_t}\big) \mp \frac{\sigma_t}{\alpha_t} \beta_t \dd t\big) \cdot \boldsymbol{\epsilon}_\phi (\cdots) \pm \frac{\sigma_t}{\alpha_t} \beta_t \epspm \dd t + \frac{\sigma_t}{\alpha_t} \dd \epspm\\
    =& \big(\dd \big(\frac{\sigma_t}{\alpha_t}\big) \mp \frac{\sigma_t}{\alpha_t} \beta_t \dd t\big) \cdot \boldsymbol{\epsilon}_\phi (\cdots) \pm \frac{\sigma_t}{\alpha_t} \beta_t \epspm \dd t \mp \frac{\sigma_t}{\alpha_t} \epspm \beta_t \dd t + \sqrt{2 \beta_t} \big(\frac{\sigma_t}{\alpha_t}\big)\dd \bd{w}_t\\
    =& \big( \dd \big( \frac{\sigma_t}{\alpha_t} \big) \mp \frac{\sigma_t}{\alpha_t} \beta_t \dd t \big) \cdot \bd{\epsilon}_\phi(\xtpm', t, y)+ \sqrt{2 \beta_t} \big( \frac{\sigma_t}{\alpha_t} \big) \dd \bd{w}_t.\label{app:eq:proof-equ}
\end{aligned}
\end{equation}
So $\xtpm'$ and $\xtpm$ follow the same dynamics.
\end{proof}

We present a stochastic sampler in Algo.~\ref{algo:stochastic} that is equivalent Algorithm 2 in EDM~\citep{karras2022elucidating} to show a practice implementation of Eq.~\ref{app:eq:clean-flow-sde} for sampling.

\subsection{Properties of $\xopm$}\label{app:subsec:prop-clean}
\subsubsection{$\xo$ are Clean Images for all $t \in [t_s,T]$}
\begin{lem}[Sample predictions are non-noisy images]\label{app:lem:gt-clean}
    The sample prediction of the diffusion model
\begin{align}
  \xc \triangleq \frac{\xt - \sigma_t \bd{\epsilon}_\phi(\xt, t, y)}{\alpha_t} \label{app:eq:sample-prediction}
\end{align}
is a weighted average of images in the target distribution $p_0(\xt[0])$:
\begin{align}
    \xc = \E [\xt[0]|\xt].
\end{align}
Thus, $\xc$ are non-noisy images. Furthermore, 
\begin{align}
    \bd{\epsilon}_\phi(\xt, t, y) = \frac{\xt - \alpha_t \E [\xt[0]|\xt]}{\sigma_t}.
\end{align}
\end{lem}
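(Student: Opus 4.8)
The plan is to recognize this lemma as Tweedie's formula: the posterior mean $\E[\xt[0]\mid\xt]$ of the clean image given a noisy observation coincides with the score-based expression on the right-hand side. The only ingredient I need from the setup is the optimality identity from the preliminaries, namely that the trained network recovers the score, $\bd{\epsilon}_\phi(\xt,t,y) = -\sigma_t \nabla_{\xt}\log p_t(\xt)$. Substituting this into the definition $\xc = (\xt - \sigma_t\bd{\epsilon}_\phi(\xt,t,y))/\alpha_t$ reduces the whole claim to establishing the single identity $\alpha_t\,\E[\xt[0]\mid\xt] = \xt + \sigma_t^2\,\nabla_{\xt}\log p_t(\xt)$.

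First I would write the marginal explicitly as the Gaussian convolution dictated by the forward process (Eq.~\ref{eq:forward-diffusion}), $p_t(\xt) = \int p_0(\xt[0])\,\mathcal{N}(\xt;\alpha_t\xt[0],\sigma_t^2\bd{I})\,\dd\xt[0]$. Differentiating under the integral sign in $\xt$ and using $\nabla_{\xt}\mathcal{N}(\xt;\alpha_t\xt[0],\sigma_t^2\bd{I}) = -\sigma_t^{-2}(\xt-\alpha_t\xt[0])\,\mathcal{N}(\xt;\alpha_t\xt[0],\sigma_t^2\bd{I})$, I would divide by $p_t(\xt)$ to form $\nabla_{\xt}\log p_t(\xt)$. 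The Gaussian kernel divided by the marginal is exactly the posterior density $p_{0t}(\xt[0]\mid\xt)$, so the integral collapses into a conditional expectation: $\sigma_t^2\nabla_{\xt}\log p_t(\xt) = -\big(\xt - \alpha_t\,\E[\xt[0]\mid\xt]\big)$. Rearranging yields the target identity, hence $\xc = \E[\xt[0]\mid\xt]$; the ``furthermore'' equation is then just the algebraic inversion $\bd{\epsilon}_\phi(\xt,t,y) = (\xt-\alpha_t\,\E[\xt[0]\mid\xt])/\sigma_t$.

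To justify the claim that $\xc$ is \emph{non-noisy}, I would note that the conditional expectation is by definition $\int \xt[0]\,p_{0t}(\xt[0]\mid\xt)\,\dd\xt[0]$, a convex average of samples from the clean data distribution $p_0$ weighted by the posterior. It therefore lies in the convex hull of the clean data and carries none of the independent per-pixel Gaussian perturbation present in $\xt$; this is the precise sense in which the sample prediction is ``a weighted average of images in the target distribution.''

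The main obstacle is not any single computation but the regularity needed to differentiate under the integral sign (exchanging $\nabla_{\xt}$ with $\int\dd\xt[0]$), which requires mild integrability and decay assumptions on $p_0$; since the kernel is Gaussian these are standard, and I would state them rather than verify them in detail. A secondary, more conceptual point is that the identity $\bd{\epsilon}_\phi = -\sigma_t\nabla_{\xt}\log p_t$ holds only for the exact score-matching minimizer, so the lemma should be read as a statement about the optimal denoiser (the approximation recorded in the preliminaries), which I would make explicit at the outset.
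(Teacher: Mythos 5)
Your proposal is correct and follows essentially the same route as the paper's proof: both substitute the score identity $\bd{\epsilon}_\phi(\xt,t,y)=-\sigma_t\nabla_{\xt}\log p_t(\xt)$, differentiate the Gaussian convolution $p_t(\xt)=\int p(\xt|\xt[0])\,p_0(\xt[0])\,\dd\xt[0]$ under the integral sign, and collapse the resulting ratio into the posterior expectation $\E[\xt[0]|\xt]$ (i.e., Tweedie's formula), with the ``furthermore'' identity obtained by algebraic rearrangement. Your explicit flagging of the regularity needed to exchange $\nabla_{\xt}$ with the integral, and of the fact that the score identity holds only for the optimal denoiser, are reasonable refinements that the paper leaves implicit.
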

\begin{proof}[proof]
\begin{equation}
\begin{aligned}
    \xc 
    =& \frac{\xt - \sigma_t \bd{\epsilon}_\phi(\xt, t, y)}{\alpha_t}\\
    =& \frac{1}{\alpha_t}\big(\xt + \sigma_t^2 \nabla_{\xt} \log p_t(\bd{x}_t)\big)\\
    =& \frac{1}{\alpha_t}\big(\xt + \frac{\sigma_t^2}{p_t(\bd{x}_t)} \nabla_{\xt} p_t(\bd{x}_t)\big)\\
    =& \frac{1}{\alpha_t}\big(\xt + \frac{\sigma_t^2}{p_t(\bd{x}_t)} \nabla_{\xt} \int p(\xt|\xt[0]) p_0(\xt[0]) \dd \xt[0]\big)\\
    =& \frac{1}{\alpha_t}\big(\xt + \frac{\sigma_t^2}{p_t(\bd{x}_t)} \int  p(\xt|\xt[0]) \nabla_{\xt} \log p(\xt|\xt[0]) p_0(\xt[0]) \dd \xt[0]\big)\\
    =& \frac{1}{\alpha_t}\big(\xt + \frac{\sigma_t^2}{p_t(\bd{x}_t)} \int  p(\xt|\xt[0]) \nabla_{\xt} \big(-\frac{(\xt-\alpha_t\xt[0])^2}{2\sigma_t^2}\big) p_0(\xt[0]) \dd \xt[0]\big)\\
    =& \frac{1}{\alpha_t}\big(\xt - \frac{1}{p_t(\bd{x}_t)} \int p(\xt|\xt[0]) (\xt-\alpha_t\xt[0]) p_0(\xt[0]) \dd \xt[0]\big)\\
    =& \frac{1}{\alpha_t}\big(\xt - \xt\frac{\int p(\xt|\xt[0]) p_0(\xt[0]) \dd \xt[0]}{p_t(\bd{x}_t)} +\alpha_t\int \xt[0] \frac{p(\xt|\xt[0])p_0(\xt[0])}{p_t(\bd{x}_t)}  \dd \xt[0]\big)\\
    =& \frac{1}{\alpha_t}\big(\xt - \xt+\alpha_t\int \xt[0] p(\xt[0]|\xt) \dd \xt[0]\big)\\
    =& \int \xt[0] p(\xt[0]|\xt) \dd \xt[0]\\
    =& \E [\xt[0]|\xt].\\
\end{aligned}
\end{equation}

Thus,
\begin{equation}
\begin{aligned}
    \bd{\epsilon}_\phi(\xt, t, y) = \frac{\xt - \alpha_t \xc}{\sigma_t}= \frac{\xt - \alpha_t \E [\xt[0]|\xt]}{\sigma_t}.
\end{aligned}
\end{equation}
\end{proof}

\begin{minipage}{\textwidth}
\begin{algorithm}[H]
\caption{A SDE sampler that is equivalent to Algorithm 2 in EDM~\citep{karras2022elucidating}}\label{algo:stochastic}
\begin{algorithmic}[1]

\STATE \textbf{Input:} Diffusion model (sample prediction) $D_\phi$, $t_{i \in \{0,\cdots,N\}}$, $\gamma_{i \in \{0,\cdots,N-1\}}$, $S_{\text{noise}}$.
\STATE \textbf{Output:} $\xo[N]$.
\STATE Initialize $\epsf_0 \sim \Normal$, $\xo[0] = \bd{0}$
\FOR{$i \in \{0,\cdots,N-1\}$}
    \STATE Sample $\bd{\epsilon}_i \sim \mathcal{N}(\bd{0}, S_{\text{noise}}^2 \bd{I})$
    \STATE $\hat{t}_i \leftarrow t_i + \gamma_i t_i$
    \STATE $\epsf_{i+1} \leftarrow \frac{t_i}{\hat{t}_i} \epsf_{i} + \sqrt{1 - (\frac{t_i}{\hat{t}_i})^2} \bd{\epsilon}_i$
    \STATE $\bd{d}_i \leftarrow (\xo[i] - D_\phi (\xo[i] + \hat{t}_i \epsf_{i+1}, \hat{t}_i))/\hat{t}_i$
    \STATE $\xo[i+1] \leftarrow \xo[i] + (t_{i+1} - \hat{t}_i) \bd{d}_i$
    \IF{$t_{i+1} \neq 0$}
    \STATE $\bd{d}'_i \leftarrow (\xo[i+1] - D_\phi (\xo[i+1] + t_{i+1} \epsf_{i+1}, t_{i+1}))/t_{i+1}$
    \STATE $\xo[i+1] \leftarrow \xo[i] + (t_{i+1} - \hat{t}_i) (\frac{1}{2}\bd{d}_i + \frac{1}{2}\bd{d}'_i)$ \COMMENT{Apply $2^{\text{nd}}$ order correction}
    \ENDIF
\ENDFOR
\STATE Return $\xo[N]$
\end{algorithmic}
\end{algorithm}
\end{minipage}

\begin{pro}[$\xopm$ are non-noisy images]\label{app:pro:c-clean}
    $\xopm$ in Eq.~\ref{app:eq:clean-flow-sde} are non-noisy images for all $t\in [t_s, T]$.
\end{pro}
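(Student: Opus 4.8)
The plan is to reduce the clean-flow dynamics for $\xopm$ to a linear relaxation toward the posterior-mean image and then solve that relaxation to exhibit $\xopm$ as a weighted average of clean images. First I would substitute the denoiser identity from Lemma~\ref{app:lem:gt-clean}, namely $\bd{\epsilon}_\phi(\bd{x},t,y)=(\bd{x}-\alpha_t\,\E[\xt[0]\mid\bd{x}])/\sigma_t$, into the first line of Eq.~\ref{app:eq:clean-flow-sde}, evaluated at $\bd{x}=\xtpm'=\alpha_t\xopm+\sigma_t\epspm$ (the reconstructed variable of Eq.~\ref{app:eq:recover}). Writing $\bd{m}_t\triangleq\E[\xt[0]\mid\xtpm']$ and using $\xtpm'-\sigma_t\epspm=\alpha_t\xopm$, the noise residual collapses to $\bd{\epsilon}_\phi(\xtpm',t,y)-\epspm=\tfrac{\alpha_t}{\sigma_t}(\xopm-\bd{m}_t)$. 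Here it is essential that $\xtpm'$ carries the true diffusion law $p_t$ (the equivalence of Eq.~\ref{app:eq:clean-flow-sde} and Eq.~\ref{app:eq:diffusion-sde} established just above), so that $\bd{m}_t$ is genuinely the posterior mean and, by Lemma~\ref{app:lem:gt-clean}, a weighted average of images in $p_0$ — hence a non-noisy image — at every $t$.

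Substituting this residual back, the clean-flow SDE becomes, with $\lambda_t=\sigma_t/\alpha_t$, the pathwise-linear relaxation $\dd\xopm=\big(\tfrac{\dd\lambda_t}{\lambda_t}\mp\beta_t\,\dd t\big)(\xopm-\bd{m}_t)$. The crucial structural observation is that the driving term $\dd\bd{w}_t$ no longer appears explicitly: randomness enters only through the adapted, clean process $\bd{m}_t$. Thus, conditional on the trajectory $\{\bd{m}_s\}$, this is an inhomogeneous linear ODE in $\xopm$, which I would solve with the integrating factor $\mu_t=\exp(-\int k_s\,\dd s)$, where $k_s\,\dd s=\tfrac{\dd\lambda_s}{\lambda_s}\mp\beta_s\,\dd s$. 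This expresses $\xopm(t)$ as an affine combination of the initial value and the family $\{\bd{m}_s\}$ over the relevant time interval.

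It then remains to check (a) that the initial value is non-noisy and (b) that the affine combination is in fact convex. For (a): in the forward process $\xo[+](t_s)\sim p_0$ is a single clean image, while in the reverse process $\xo[-](T)=\bd{0}$ is the constant gray (mean-prediction) image, which is trivially non-noisy. For (b): that the weights sum to one follows because constant paths are fixed points of the relaxation (set $\xopm\equiv\bd{m}_s\equiv\bd{v}$, giving $\dd\xopm=\bd{0}$); nonnegativity of the weight density, which is proportional to $F(s)k_s$ with the integrating kernel $F>0$, reduces to the sign of $k_s$ along the integration direction. For the reverse trajectory one integrates from $T$ down to $t$, and since $\lambda_s$ is increasing and $\beta_s\ge 0$ one gets $k_s>0$; after rewriting $-\int_T^t=\int_t^T$ the weights become nonnegative and, with the sum-to-one identity, form a genuine convex combination, so $\xopm(t)$ is a weighted average of non-noisy images and hence non-noisy. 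The forward case is symmetric under the opposite sign of $\mp$.

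The main obstacle I anticipate is step (b): getting the sign of $k_s$ correct in each time direction (forward versus reverse, and the two $\mp$ branches) and thereby confirming nonnegativity of the weights, since the sum-to-one identity by itself only yields an affine — not necessarily convex — combination. A secondary technical point is justifying the integrating-factor solution at the level of the SDE: because the equation is linear in $\xopm$ with $\bd{m}_t$ entering as an adapted coefficient rather than through $\dd\bd{w}_t$, the pathwise ODE solution is legitimate, but I would state this reduction explicitly rather than manipulate the Wiener term directly.
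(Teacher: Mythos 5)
Your proposal follows the same core reduction as the paper's proof: you substitute the posterior-mean identity of Lemma~\ref{app:lem:gt-clean} into Eq.~\ref{app:eq:clean-flow-sde} and rewrite the dynamics as $\dd\xopm = \big(\dd(\log\tfrac{\sigma_t}{\alpha_t}) \mp \beta_t\,\dd t\big)\,(\xopm - \E[\xt[0]\mid \alpha_t\xopm+\sigma_t\epspm])$, which is exactly the paper's Eq.~\ref{app:eq:proof-clean}, combined with the observation that the Wiener increment never enters $\xopm$ directly and that the initial conditions are clean. Where you go beyond the paper is the final step: the paper stops at the informal statement that $\xopm$ is ``always moving towards'' the non-noisy sample prediction, whereas you solve the pathwise-linear ODE with an integrating factor and, for the reverse branch, exhibit $\xo[-](t)$ as a genuine convex combination of the clean initial value $\bd{0}$ and the posterior means $\{\bd{m}_s\}_{s\in[t,T]}$: the weights are nonnegative because $k_s = \tfrac{\dd}{\dd s}\log(\sigma_s/\alpha_s) + \beta_s > 0$ on that branch, and they sum to one by your telescoping identity. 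That is a real strengthening — it turns the paper's heuristic into a precise statement at the same informal standard of ``non-noisy $=$ weighted average of images in $p_0$'' that Lemma~\ref{app:lem:gt-clean} itself uses. (One minor remark: your insistence that $\xtpm'$ carry the law $p_t$ is unnecessary; the identity $\bd{\epsilon}_\phi(\bd{x},t,y) = (\bd{x}-\alpha_t\E[\xt[0]\mid \bd{x}])/\sigma_t$ holds pointwise in $\bd{x}$, so the residual collapse is valid regardless of the law of the input.)

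The one genuine gap is your claim that ``the forward case is symmetric under the opposite sign of $\mp$.'' It is not. On the forward branch the relaxation rate is $k_s = \tfrac{\dd}{\dd s}\log(\sigma_s/\alpha_s) - \beta_s$, whose sign is indefinite; in particular, in the ODE case $\beta_s\equiv 0$ one has $k_s>0$ throughout, and your integrating-factor formula gives $\xo[+](t) = \tfrac{1}{\nu_t}\,\xo[+](t_s) - \int_{t_s}^{t} \tfrac{\nu_s k_s}{\nu_t}\,\bd{m}_s\,\dd s$ with $\nu_t\in(0,1)$, so the weight on the initial image exceeds one and the weights on the posterior means are negative. The combination is affine (weights sum to one) but not convex, so ``weighted average of clean images'' does not follow, and the forward half of the proposition is not established by your argument as written. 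In fairness, the paper's own proof is loose on precisely this point: its claim that $\xopm$ always moves \emph{towards} the sample prediction is literally true only on the reverse branch, while on the forward branch the dynamics move away from $\bd{m}_t$ whenever $\dd\log(\sigma_t/\alpha_t) > \beta_t\,\dd t$. To cover both branches you should fall back, for the forward case, on the weaker but sign-independent argument the paper implicitly uses: every increment of $\xopm$ is a scalar multiple of $\xopm - \bd{m}_t$, a difference of two non-noisy quantities, so no Gaussian noise is ever injected, and cleanliness propagates from the clean initialization.
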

\begin{proof}[proof]
Since the initial conditions of $\xopm$ ($\xo[-] = \bd{0}$ for reverse process and $\xo[+] \sim p_0(\xt[0])$ for forward process) implies $\xopm$ are initialized as non-noisy images, we only need to show that the dynamic of $\xopm$ will not introduce Gaussian noise into $\xopm$.

The dynamic of $\xopm$ can be reformulated as:
\begin{equation}
\begin{aligned}
    \dd \xopm 
    =& \big(\dd \big(\frac{\sigma_t}{\alpha_t}\big) \mp \frac{\sigma_t}{\alpha_t} \beta_t \dd t\big) \cdot \big(\boldsymbol{\epsilon}_\phi(\alpha_t \xopm + \sigma_t \epspm, t, y) - \epspm\big),\\
    =& \big(\dd \big(\frac{\sigma_t}{\alpha_t}\big) \mp \frac{\sigma_t}{\alpha_t} \beta_t \dd t\big) \cdot \frac{\alpha_t \xopm + \sigma_t \epspm - \alpha_t \E [\xt[0]|\alpha_t \xopm + \sigma_t \epspm] - \sigma_t \epspm}{\sigma_t},\\
    =& \big(\dd \big(\log \frac{\sigma_t}{\alpha_t}\big) \mp \beta_t \dd t\big) \cdot (\xopm - \E [\xt[0]|\alpha_t \xopm + \sigma_t \epspm]).\label{app:eq:proof-clean}
\end{aligned}
\end{equation}
As Eq.~\ref{app:eq:proof-clean} shows that $\xopm$ is always moving towards non-noisy sample prediction $\xc = \E [\xt[0]|\xt]$ for all $t \in [t_s, T]$, $\xopm$ will be non-noisy for all $t \in [t_s, T]$.
\end{proof}

We also visualize $\xopm$ at random timestep $t \in [0, T]$ of Stable Diffusion~\citep{rombach2022high} sampling processes in Appx.~Fig.~\ref{app:fig:vis-variable-clean} to show that they are visually clean (non-noisy). We use clean variable to refer to $\xopm$ in this work since it is always non-noisy.

\subsubsection{Initialization of $\xo$}

The initial condition of reverse-time clean flow SDE (Eq.~\ref{app:eq:clean-flow-sde}) is given by $\xt[-] = \bd{0}$ and $\epsf_{-} \sim \Normal$. This is consistent with a typical initialization of NeRF: the whole scene of the NeRF being all grey.

When we set $\xo[-] = \bd{0}$ as the initial condition for the clean flow SDE, it corresponds to the initial condition $\xt[-] \sim \NormalT$~\citep{karras2022elucidating} in the diffusion SDE (Eq.~\ref{app:eq:diffusion-sde}). However, since we set a small nonzero $\alpha_T$ at the beginning, the strict initial condition of the diffusion SDE should be $p_T(\xt[T])$, which is slightly different from $\NormalT$. In this case, we should set $\xo[-] \sim p_0(\xt[0])$ in the clean flow SDE to make the initial condition of the two SDE identical. Prior works usually ignore the small difference between $p_T(\xt[T])$ and $\NormalT$ and starts from pure noise when sampling~\citep{lin2024common}, and from our practical observation, given different initial $\xo[-]\neq\bd{0}$ but the same $\epsf_{-}$, clean flow SDE will yield almost identical outputs (given the same seeds), which implies the endpoints of $\xo$ are not sensitive to initialization of $\xo$. So we choose to set $\xo[-]=\bd{0}$ in this work as the initial condition.

\subsubsection{Endpoints of $\xo$}
At the end of the reverse-time clean flow SDE, $\xo[-] = \xt[0] \sim p_0 (\xt[0])$. So $\xo$ also ends as a sample in the target distribution $p_0(\xt[0])$ as $\xt[0]$ in the reverse-time diffusion SDE.

\subsection{Properties of $\epspm$}
$\epspm$ can be seen as the ``pure noise'' part in the clean flow SDE (Eq.~\ref{app:eq:clean-flow-sde}). Notably, the evolution of $\epspm$ does not depend on $\xo$ and has a closed-form solution. The dynamic of $\epspm$ is given by
\begin{align}
    \dd \epspm = \mp \epspm \beta_t \dd t + \sqrt{2 \beta_t} \dd \bd{w}_t.\label{app:eq:noise-dynamic}
\end{align}
The initial condition for $\epspm$ in both the forward and reverse process are $\epspm \sim \Normal$.

\subsubsection{Closed-form Solutions}
\label{app:subsec:close}
For the forward process,
\begin{equation}
\begin{aligned}
    \dd \big( e^{\int_{0}^t \beta_s \dd s} \epsp \big)
    &= e^{\int_{0}^t \beta_s \dd s} \dd \epsp + \epsp \beta_t e^{\int_{0}^t \beta_s \dd s} \dd t\\
    &= e^{\int_{0}^t \beta_s \dd s} \sqrt{2\beta_t} \dd \bd{w}_t - \epsp \beta_t e^{\int_{0}^t \beta_s \dd s} \dd t\ + \epsp \beta_t e^{\int_{0}^t \beta_s \dd s} \dd t\\
    &= e^{\int_{0}^t \beta_s \dd s} \sqrt{2\beta_t} \dd \bd{w}_t.\label{app:eq:closed-form-eps}
\end{aligned}
\end{equation}
Integral on both side of Eq.~\ref{app:eq:closed-form-eps}, we have
\begin{equation}
\begin{aligned}
    e^{\int_{0}^t \beta_s \dd s} \epsp - \epsf_0
    &= \int_0^t \sqrt{2\beta_s} e^{\int_0^s \beta_r \dd r} \dd \bd{w}_s.
\end{aligned}
\end{equation}
Thus, we obtain the solution of $\epsp$:
\begin{align}
    \epsp = e^{-\int_0^t \beta_s \dd s} \cdot \epsf_0 + e^{-\int_0^t \beta_s \dd s} \int_0^t \sqrt{2\beta_s} e^{\int_0^s \beta_r \dd r} \dd \bd{w}_s.\label{app:eq:closed-p}
\end{align}
Similarly, we can obtain the solution of $\epsm$:
\begin{align}
    \epsm = e^{-\int^T_t \beta_s \dd s} \cdot \epsf_T + e^{-\int^T_t \beta_s \dd s} \int^t_T \sqrt{2\beta_s} e^{\int^T_s \beta_r \dd r} \dd \bar{\bd{w}}_s.\label{app:eq:closed-m}
\end{align}
Specifically, we can derive a closed-form formulation to compute $\epsp (t)$ given $\epsp (t')$ for $t'<t$ from Eq.~\ref{app:eq:closed-p}, which takes the following form:
\begin{align}
    \epsp (t) 
    =& e^{-\int_{t'}^t \beta_s \dd s} \epsp (t')+ \sqrt{1-e^{-2\int_{t'}^t \beta_s \dd s}} \bd{\epsilon},\\
    =& \sqrt{1-\gamma} \epsp (t')+ \sqrt{\gamma} \bd{\epsilon},
\end{align}
where
\begin{align}
\gamma = 1-e^{-2\int_{t'}^t \beta_s \dd s},\;\bd{\epsilon} \sim \Normal.
\end{align}

For $\epsm (t)$ and $t'>t$,
\begin{align}
    \epsm (t) 
    =& e^{-\int_t^{t'} \beta_s \dd s} \epsm (t')+ \sqrt{1-e^{-2\int_t^{t'} \beta_s \dd s}} \bd{\epsilon},\label{app:eq:reverse-ddpm}\\
    =& \sqrt{1-\gamma} \epsm (t')+ \sqrt{\gamma} \bd{\epsilon},\label{app:eq:reverse-ddpm-gamma}
\end{align}
where 
\begin{align}
    \gamma = 1-e^{-2\int_t^{t'} \beta_s \dd s},\;\bd{\epsilon} \sim \Normal.\label{app:eq:gamma-leq1}
\end{align}

\subsubsection{Special case solution of DDPM}
\label{app:subsec:ddpm}
DDPM~\citep{ho2020denoising} corresponds to a special choice of $\beta_t$, where $\beta_t=\frac{\dd (\sigma_t/\alpha_t)/\dd t}{\sigma_t/\alpha_t}$~\citep{karras2022elucidating}. We present the solution of Eq.~\ref{app:eq:reverse-ddpm} when $\beta_t$ corresponds to the choice of DDPM in the following:

\begin{align}
    \epsm (t) 
    =& \frac{\sigma_t/\alpha_t}{\sigma_T/\alpha_T} \epsf_T+ \sqrt{1-\left(\frac{\sigma_t/\alpha_t}{\sigma_T/\alpha_T}\right)^2} \bd{\epsilon}.
\end{align}

Assuming a designed schedule such that a $k$-step DDPM has a constant $\gamma$ in two consecutive steps as in Eq.~\ref{eq:fresh-noise}. We get $\epsm (k) = (1-\gamma)^{\frac{k}{2}} \epsm (0)+ (1-(1-\gamma)^k)^{\frac{1}{2}} \bd{\epsilon}$. Thus, we obtain a value of $\gamma$ in Eq.~\ref{eq:fresh-noise} that corresponds to DDPM:

\begin{align}
    \gamma = 1 - (\frac{\sigma_t/\alpha_t}{\sigma_T/\alpha_T})^{\frac{2}{k}} 
    \approx \frac{2 \log \frac{\sigma_T/\alpha_T}{\sigma_t/\alpha_t}}{k}.\label{app:eq:put-in-ddpm}
\end{align}

Putting a typical parameter configuration in our experiments with Stable Diffusion (DDPM sampler) into Eq.~\ref{app:eq:put-in-ddpm}, where $t\approx 0.212$, $\sigma_t/\alpha_t \approx 0.60$, $T \approx 0.974$, $\sigma_T/\alpha_T \approx 12.59$ and $k=25000$, we get $\gamma\approx 0.00024$.

\subsubsection{Variance of $\epspm$}
All vector components of $\epspm$ are of unit variance for all $t \in [0,T]$:
\begin{equation}
\begin{aligned}
    \text{Var}(\epsf_{+,i}) &= e^{-2\int_0^t \beta_s \dd s} + e^{-2\int_0^t \beta_s \dd s} \int_0^t 2\beta_s e^{2\int_0^s \beta_r \dd r} \dd s\\
    &= e^{-2\int_0^t \beta_s \dd s}(1 + \int_0^t 2\beta_s e^{2\int_0^s \beta_r \dd r} \dd s)\\
    &= e^{-2\int_0^t \beta_s \dd s}(1 + \int_0^t \dd e^{2\int_0^s \beta_r \dd r})\\
    &= e^{-2\int_0^t \beta_s \dd s}(1 + e^{2\int_0^t \beta_r \dd r} - 1)\\
    &= 1,
\end{aligned}
\end{equation}
\begin{equation}
\begin{aligned}
    \text{Var}(\epsf_{-,i}) &= e^{-2\int^T_t \beta_s \dd s} + e^{-2\int^T_t \beta_s \dd s} \int^T_t 2\beta_s e^{2\int^T_s \beta_r \dd r} \dd s\\
    &= e^{-2\int^T_t \beta_s \dd s}(1 + \int^T_t 2\beta_s e^{2\int^T_s \beta_r \dd r} \dd s)\\
    &= e^{-2\int^T_t \beta_s \dd s}(1 - \int^T_t \dd e^{2\int^T_s \beta_r \dd r})\\
    &= e^{-2\int^T_t \beta_s \dd s}(1 - 1 + e^{2\int^T_t \beta_r \dd r})\\
    &= 1.
\end{aligned}
\end{equation}

\subsection{Clean Flow ODE}
When we set $\beta_t = 0$ in clean flow SDE (Eq.~\ref{app:eq:clean-flow-sde}), it becomes determined and changes to an ODE (Eq.~\ref{eq:dxo-clean-flow}). Furthermore, $\dd \epspm = 0$ and thus $\epspm$ will become a constant $\epsf$. This ODE is the same ODE presented in FSD~\citep{yan2024fsd}. It is also equivalent to the signal-ODE presented in BOOT~\citep{gu2023boot} when the diffusion model is changed to sample-prediction.

\begin{figure}[t]
\centering
\begin{minipage}{0.48\textwidth}
  \centering
  \includegraphics[width=\textwidth]{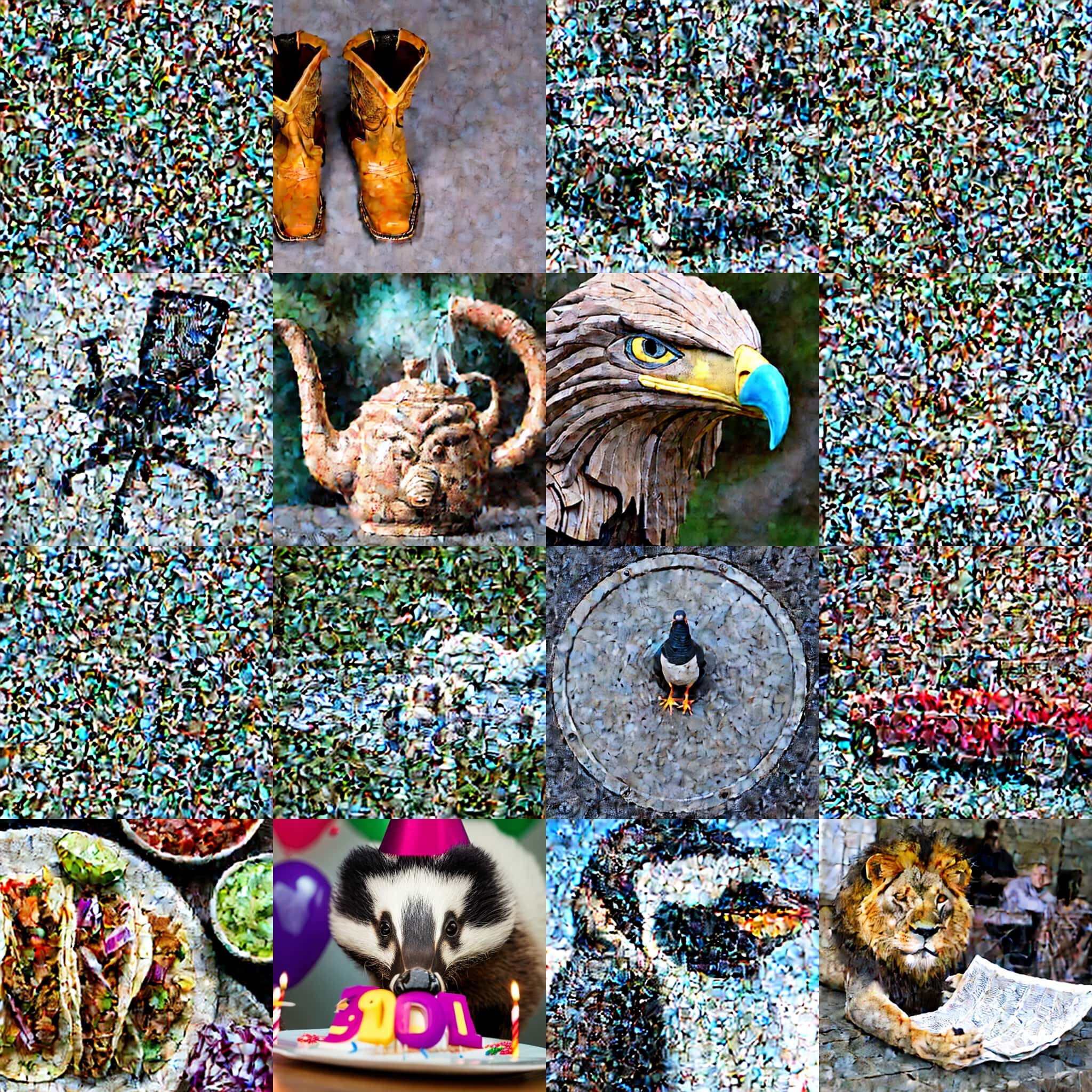}
  \caption{\textbf{Visualization of noisy variable $\xt$.}}
  \label{app:fig:vis-variable-noisy}
\end{minipage}
\hfill
\begin{minipage}{0.48\textwidth}
  \centering
  \includegraphics[width=\textwidth]{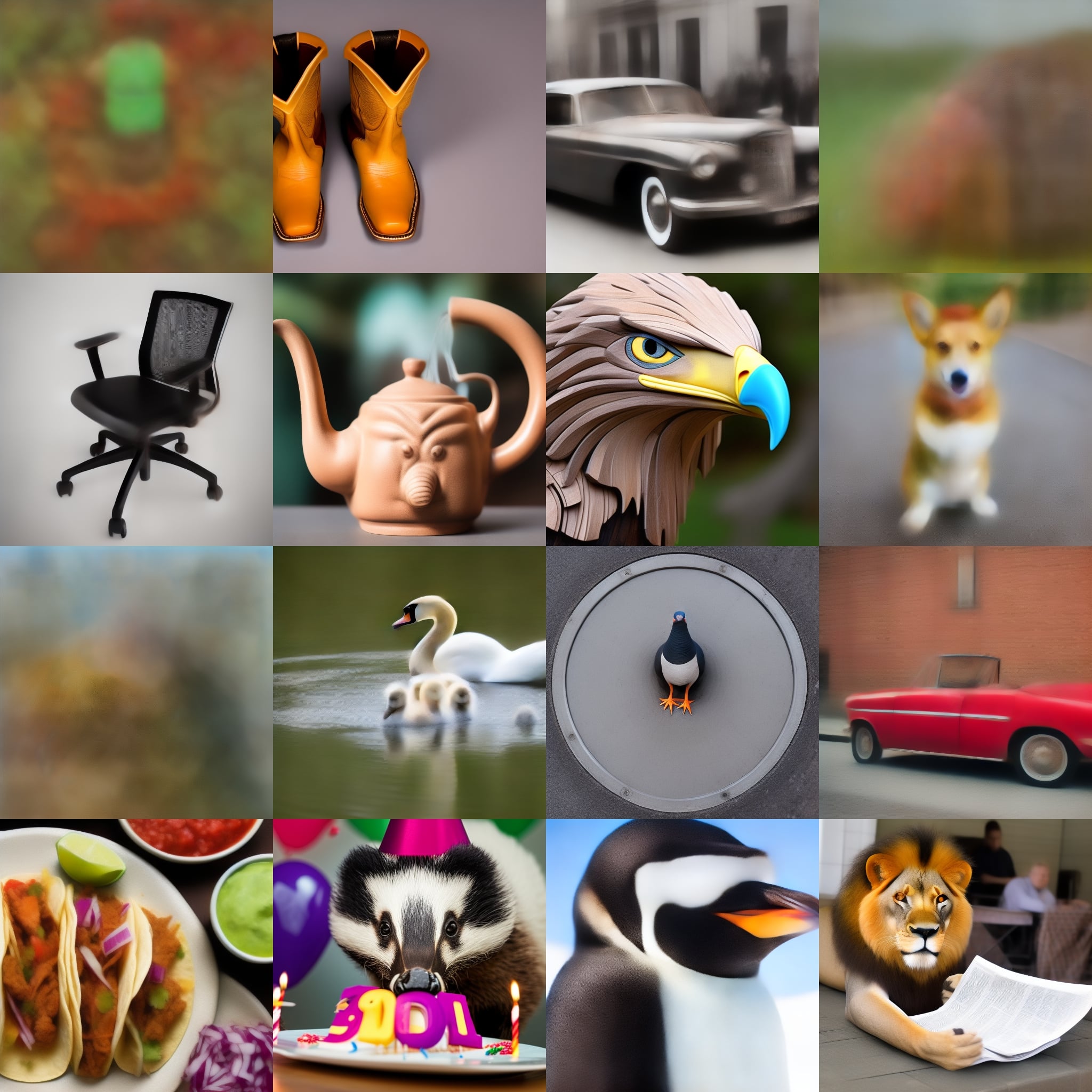}
  \caption{\textbf{Visualization of clean variable $\xo$.}}
  \label{app:fig:vis-variable-clean}
\end{minipage}
\end{figure}
\section{Discussion on the Choice of the Variable Space}
\label{appx:sec:discuss-variable}
\subsection{Ground-truth Variable}
Apart from the clean variable $\xo$, FSD~\citep{yan2024fsd} also defined another variable space that is visually clean, which is the \textit{ground-truth variable} $\xc$. $\xc$ is defined by
\begin{align}
  \xc \triangleq \frac{\xt - \sigma_t \boldsymbol{\epsilon}_\phi(\boldsymbol{x}_t, t, y)}{\alpha_t}. \label{eq:change-of-variable-gt}
\end{align}
$\xc$ is also known as the ``sample prediction'' of the diffusion model. The ODE on $\xc$ is given by:
\begin{align}
  \dd \xc = -(\frac{\sigma_t}{\alpha_t}) \cdot \dd\bd{\epsilon}_\phi(\xt, t, y). \label{eq:dxo-clean-flow-gt}
\end{align}
Concurrent work SDI~\citep{lukoianov2024score} shares an insight similar to ours by also using rendered images to replace the ``non-noisy variables'' to guide 3D generation. The difference between SDI~\citep{lukoianov2024score} and our method is that SDI replaced the ground-truth variable $\xc$ with rendered image $\gc$ but we replace the clean variable $\xo$ with $\gc$.

Theoretically speaking, if it's just to solve the OOD problem when using image PF-ODE as a guidance for 3D generation, we think it's both reasonable to replace $\xc$ and $\xo$ with rendered images, since they are both non-noisy throughout the diffusion process (Lemma~\ref{app:lem:gt-clean} and Proposition~\ref{app:pro:c-clean}). 
However, it's difficult to exactly compute the update rule in Eq.~\ref{eq:dxo-clean-flow-gt} since $\xt$ is required on right hand side of Eq.~\ref{eq:dxo-clean-flow-gt}. In order to recover $\xt$ given $\xc$, SDI needs to solve a fixed point equation, which is hard to be solved~\citep{lukoianov2024score}. In practice, SDI use a loss gradient similar to ISM. SDI interpret the DDIM inversion as the approximated solution of the fixed point equation.
Difficulties also appear in works that attempt to apply guidance on the ground-truth variable $\xc$ for conditional image generation, as seen in UGD~\citep{bansal2023universal} and FreeDoM~\citep{yu2023freedom}.
In contrast, we can compute the evolution of $\xopm$ exactly according to Eq.~\ref{app:eq:clean-flow-sde} without the need to solve a fixed point equation.

Additionally, another recent work ISM~\citep{liang2023luciddreamer} can also be viewed as replacing the ground-truth variable $\xc$ as discussed in SDI~\citep{lukoianov2024score}, since the main difference between ISM and SDI loss is whether to apply text condition when computing DDIM inversion.

\subsection{Comparison with Consistent3D}

Consistent3D~\citep{wu2024consistent3d} introduced the Consistency Distillation Sampling (CDS) loss by modifying the consistency training loss within the score distillation framework. Their insights into the connection between SDS and Diffusion SDE align closely with ours. However, their CDS loss stems from the consistency model training loss, similar to how SDS is derived from the diffusion model training loss, disregarding the Jacobian term~\citep{poole2022dreamfusion}. In contrast, our CFD loss directly follows the principles of diffusion model sampling through the ODE/SDE formulation. The image rendered from a specific camera view corresponds directly to a point on the ODE/SDE trajectory, resulting in distinct final training losses that differ from their CDS loss. Furthermore, our approach integrates a multiview consistent noising strategy, enhancing both consistency and robustness.

From a theoretical perspective, our work provides a more rigorous mathematical connection between score distillation and diffusion sampling compared with Consistent3D. Specifically: (i) While Consistent3D suggests that SDS can be interpreted as a form of SDE sampling, their proof relies on approximating the diffusion process by assuming optimal training at each step, an assumption that may not hold in practical experiments. In contrast, our approach does not rely on optimal training at every step. Additionally, our theory (Eq. 10 in our paper) covers a broader range of diffusion SDEs, including EDM~\citep{karras2022elucidating} and PF-ODE as a special case. (ii) The CDS approach lacks a direct correspondence to a probability flow ODE trajectory, while our interpretation establishes a direct mapping between rendered images and points on the ODE/SDE trajectory.

\subsection{Properties of Clean Variable}
Since clean flow ODE is a special case of clean flow SDE when $\beta_t = 0$, $\xc$ in the ODE also maintains the ``clean properties'' discussed in Appx.~\ref{app:subsec:prop-clean}.

\end{document}